\newif\ifdraft \drafttrue
\newcommand{\xhdr}[1]{\vspace{2mm} \noindent{\bf #1}}
\newcommand\cF{\mathcal{F}}
\newcommand\cX{\mathcal{X}}
\newcommand\cY{\mathcal{Y}}
\newcommand\cD{\mathcal{D}}
\newcommand{\E}{\mathop{\mathbb{E}}}
\newcommand{\cR}{\mathbb{R}}
\newcommand{\Lagr}{\mathcal{L}}
\DeclareMathOperator*{\argmin}{\mathrm{argmin}}
\newcommand{\ind}{\mathbbm{1}}
\newcommand{\pair}{\rho}
\newcommand{\nullPair}{null}
\newcommand{\R}{\cR}
\newtheorem{theorem}{Theorem}[section]
\newtheorem{observation}[theorem]{Observation}
\newtheorem{corollary}[theorem]{Corollary}
\newtheorem{remark}[theorem]{Remark}
\newtheorem{definition}[theorem]{Definition} 
\newtheorem{lemma}[theorem]{Lemma} 
\newcommand{\bx}{x}
\newcommand{\by}{y}
\newcommand{\Auditor}{\mathcal{J}}
\newcommand{\Algorithm}{\mathcal{A}}
\newcommand{\metric}{d}
\newcommand{\fviolation}{v}
\newcommand{\Regret}{\mathbf{Regret}}
\newcommand{\round}{t}
\newcommand{\totalRound}{T}
\newcommand{\subRound}{\tau}
\newcommand{\totalSubRound}{k}
\newcommand{\interpolation}{q}
\newcommand{\cH}{\mathcal{H}}
\newcommand{\norm}[1]{\left\lVert#1\right\rVert}
\newcommand{\misclassLoss}{\text{Err}}
\newcommand{\fairLoss}{\text{Unfair}}
\DeclarePairedDelimiter\floor{\lfloor}{\rfloor}
\newcommand{\cZ}{\mathcal{Z}}
\newcommand{\onlineLoss}{L}
\newcommand{\acc}{\text{acc}}
\newcommand{\fair}{\text{fair}}
\newcommand{\seed}{\text{seed}}
\newcommand{\cP}{\mathcal{P}}
\newcommand{\fairbatch}{\textsc{fair-batch}}
\newcommand{\batch}{\textsc{batch}}
\newcommand{\policyContext}{\xi}
\newcommand{\policyContextClass}{\Xi}
\newcommand{\policy}{\psi}
\newcommand{\policyClass}{\Psi}
\newcommand{\ftpl}{\textsc{context-ftpl}}
\newcommand{\arbitraryInstance}{v}
 \title{Metric-Free Individual Fairness in Online Learning\footnote{Previous versions of this paper included an error in one of the proofs, which was fixed using an additional assumption in the most recent version. In this version, we correct the error without the need for any additional assumptions. However, we achieve slightly slower rates than before. We were also made aware of the connection to the online optimization with long-term constraints literature, and the new related work section discusses some similarities and differences.}}
 \author{Yahav Bechavod\thanks{ The Hebrew University. Email: \texttt{yahav.bechavod@cs.huji.ac.il}.} \qquad Christopher Jung\thanks{University of Pennsylvania. Email: \texttt{chrjung@seas.upenn.edu}.} \qquad Zhiwei Steven Wu\thanks{ Carnegie Mellon University. Email: \texttt{zstevenwu@cmu.edu}. }}
\begin{document}

\maketitle

\begin{abstract}
We study an online learning problem subject to the constraint of individual fairness, which requires that similar individuals are treated similarly.  Unlike prior work on individual fairness, we do not assume the similarity measure among individuals is known, nor do we assume that such measure takes a certain parametric form.  Instead, we leverage the existence of an \emph{auditor} who detects fairness violations without enunciating the quantitative measure. In each round, the auditor examines the learner's decisions and attempts to identify a pair of individuals that are treated unfairly by the learner. We provide a general reduction framework that reduces online classification in our model to standard online classification, which allows us to leverage existing online learning algorithms to achieve sub-linear regret and number of fairness violations. Surprisingly, in the stochastic setting where the data are drawn independently from a distribution, we are also able to establish PAC-style fairness and accuracy generalization guarantees~(\cite{YonaR18}), despite only having access to a very restricted form of fairness feedback. Our fairness generalization bound qualitatively matches the uniform convergence bound of~\cite{YonaR18}, while also providing a meaningful accuracy generalization guarantee. Our results resolve an open question by~\cite{GillenJKR18} by showing that online learning under an unknown individual fairness constraint is possible even without assuming a strong parametric form of the underlying similarity measure.
\end{abstract}

\thispagestyle{empty} \setcounter{page}{0}
\clearpage

\tableofcontents
\clearpage

\section{Introduction}\label{sec:introduction}
As machine learning increasingly permeates many critical aspects of society, including education, healthcare, criminal justice, and lending, there is by now a vast literature that studies how to make machine learning algorithms fair (see, e.g., \cite{ChouR}; \cite{podesta2014big}; \cite{CorbettDaviesGo18}). Most of the work in this literature tackles the problem by taking the \emph{statistical group fairness} approach that first fixes a small collection of high-level groups defined by protected attributes (e.g., race or gender) and then asks for approximate parity of some statistic of the predictor, such as positive classification rate or false positive rate, across these groups (see, e.g., \cite{hardt16, chouldechova, kleinberg2017inherent, AgarwalBD0W18}). While notions of group fairness are easy to operationalize, this approach is aggregate in nature and does not provide fairness guarantees for finer subgroups or individuals \citep{individualfairness, Hebert-JohnsonK18, KearnsNRW18}.

In contrast, the \emph{individual fairness} approach aims to address this limitation by asking for explicit fairness criteria at an individual level. In particular, the compelling notion of individual fairness proposed in the seminal work of \cite{individualfairness} requires that similar people are treated similarly. The original formulation of individual fairness assumes that the algorithm designer has access to a task-specific fairness metric that captures how similar two individuals are in the context of the specific classification task at hand. In practice, however, such a fairness metric is rarely specified, and the lack of  metrics has been a major obstacle for the wide adoption of individual fairness. 
There has been recent work on learning the fairness metric based on different forms of human feedback. For example, \cite{Ilvento} provides an algorithm for learning the  metric by presenting human arbiters with queries concerning the distance between individuals, and \cite{GillenJKR18} provide an online learning algorithm that can eventually learn a Mahalanobis metric based on identified fairness violations. While these results are encouraging, they are still bound by several limitations. In particular, it might be difficult for humans to enunciate a precise quantitative similarity measure between individuals. Moreover, their similarity measure across individuals may not be consistent with any metric (e.g., it may not satisfy the triangle inequality) and is unlikely to be given by a simple parametric form (e.g., Mahalanobis metric function).

To tackle these issues, this paper studies \emph{metric-free} online learning algorithms for individual fairness that rely on a weaker form of interactive human feedback and minimal assumptions on the similarity measure across individuals. Similar to the prior work of \cite{GillenJKR18}, we do not assume a pre-specified metric, but instead assume access to an \emph{auditor}, who observes the learner's decisions over a group of individuals that show up in each round and attempts to identify a fairness violation---a pair of individuals in the group that should have been treated more similarly by the learner. Since the auditor only needs to identify such unfairly treated pairs, there is no need for them to enunciate a quantitative measure -- to specify the distance between the identified pairs. Moreover, we do not impose any parametric assumption on the underlying similarity measure, nor do we assume that it is actually a metric since we do not require that similarity measure to satisfy the triangle inequality.
Under this model, we provide a general reduction framework that can take any online classification algorithm (without fairness constraint) as a black-box and obtain a learning algorithm that can simultaneously minimize cumulative classification loss and the number of fairness violations. Our results in particular remove many strong assumptions in \cite{GillenJKR18}, including their parametric assumptions on linear rewards and Mahalanobis distances, and answer several questions left open in their work. We now provide an overview of the results.

\subsection{Overview of Model and Results}
We study an online classification problem: over rounds $\round = 1, \ldots ,\totalRound$, a learner observes a small set of $\totalSubRound$ individuals\footnote{For ease of presentation, we present our results for the case where $k=O(1)$, which is relatively standard in the relevant online learning literature.} with their feature vectors  $(x_\subRound^\round )_{\subRound=1}^\totalSubRound$ in space $\cX$. The learner tries to predict the label $y^\round_\totalSubRound \in \{0, 1\}$ of each individual with a ``soft'' predictor $\pi^\round$ that predicts $\pi^\round(x_\subRound^\round)\in [0,1]$ on each $x_\subRound^\round$ and incurs classification loss $\vert\pi^\round(x_\subRound^\round) - y_\subRound^\round\vert$. Then an auditor investigates whether the learner has violated the individual fairness constraint on any pair of individuals within this round, that is, if there exists
 $({\subRound_1}, {\subRound_2}) \in [\totalSubRound]^2$ such that $\vert\pi^\round(x_{\subRound_1}^\round) - \pi^\round(x_{\subRound_2}^\round)| > \metric(x_{\subRound_1}^\round, x_{\subRound_2}^t) + \alpha$, where $\metric\colon \cX \times \cX \rightarrow \mathbb{R}_{+}$ is an  unknown distance function and $\alpha$ denotes the auditor's tolerance. If such a violation has occurred on any number of pairs, the auditor identifies one such pair and incurs a fairness loss of 1; otherwise, the fairness loss is 0. The learner then updates the predictive policy based on the observed labels and the received fairness feedback. Under this model, our results include:
\begin{itemize}
    \item \textbf{A General Reduction From Metric-Free Online Classification to Standard Online Classification}
    
    Our reduction-based algorithm can take any no-regret online (batch) classification learner as a black-box and achieve sub-linear cumulative fairness loss and sub-linear regret on mis-classification loss compared to the most accurate policy that is fair on every round. In particular, our framework can leverage the generic exponential weights method \citep{FreundS97,cesa,arora2012multiplicative} and also oracle-efficient methods, including variants of Follow-the-Perturbed-Leader (FTPL) (e.g., \cite{SyrgkanisKS16, ftpl2, AgarwalGH19}), that further reduces online learning to standard supervised learning or optimization problems. We instantiate our framework using two online learning algorithms (exponential weights and $\ftpl$), both of which obtain a $O(\sqrt{T})$ regret rate.

    \item \textbf{Fairness and Accuracy Generalization Guarantee of the Learned Policy}    
    While our algorithmic results hold under adversarial arrivals of the individuals, in the stochastic arrivals setting we show that the uniform average policy over time is probably approximate correct and fair (PACF)~\citep{YonaR18}---that is, the policy is approximately fair on almost all random pairs drawn from the distribution and nearly matches the accuracy guarantee of the best fair policy. In particular, we show that the average policy $\pi^{avg}$ with high probability satisfies 
    \[\Pr_{x,x'}[|\pi^{avg}(x) - \pi^{avg}(x') | > \alpha + T^{-c}] \leq O(T^{-c})\]
    for some constant $c>0$. In other words, we achieve non-trivial PACF uniform convergence sample complexity as \cite{YonaR18}, albeit a slightly worse rate.\footnote{\cite{YonaR18} show that if a policy $\pi$ is $\alpha$-fair on all pairs in a i.i.d. dataset of size $m$, then $\pi$ satisfies $\Pr_{x,x'}[|\pi(x) - \pi(x')| > \alpha + \epsilon]\leq \epsilon$, as long as $m\geq \tilde\Omega(1/\epsilon^4)$. } However, we establish our generalization guarantee through fundamentally different techniques. While their work assumes a fully specified metric and i.i.d. data, the learner in our setting can only access the similarity measure through an auditor's limited fairness violations feedback.
    The main challenge we need to overcome is that the fairness feedback is inherently adaptive---that is, the auditor only provides feedback for the sequence of deployed policies, which are updated adaptively over rounds. In comparison, a fully known metric allows the learner to evaluate the fairness guarantee of all policies simultaneously. Therefore, we cannot rely on their uniform convergence result to bound the fairness generalization error, but instead we leverage a probabilistic argument that relates the learner's regret to the distributional fairness guarantee.
\end{itemize}

\subsection{Related Work}

\xhdr{Solving open problems in \cite{GillenJKR18}.} The most related work to ours is \cite{GillenJKR18}, which studies the linear contextual bandit problem subject to individual fairness constraints with respect to an unknown Mahalanobis metric. Similar to our work, they also assume an auditor who can identify fairness violations in each round and provide an online learning algorithm with sublinear regret and a bounded number of fairness violations. Our results resolve two main questions left open in their work. First, we assume a weaker auditor who is only required to identify a single fairness violation (as opposed to all of the fairness violations in their setting). Second, we remove the strong parametric assumption regarding the Mahalanobis metric and instead work with a broad class of similarity functions that need not take a metric form.

Starting with \cite{JosephKMR16}, there is a different line of work that studies online learning for individual fairness, but subject to a different notion called meritocratic fairness \citep{JabbariJKMR17,JosephKMNR18, KannanKMPRVW17}. These results present algorithms that are ``fair'' within each round but again rely on strong realizability assumptions--their fairness guarantee depends on the assumption that the outcome variable of each individual is given by a linear function. \cite{GuptaK19} also studies online learning subject to individual fairness but with a known metric. They formulate a one-sided fairness constraint across time, called fairness in hindsight, and provide an algorithm with regret $O(T^{M/(M+1)})$ for some distribution-dependent constant $M$. 

Our work is related to several others that aim to enforce individual fairness without a known metric. \cite{Ilvento} studies the problem of metric learning by asking human arbiters distance queries. Unlike \cite{Ilvento}, our algorithm does not explicitly learn the underlying similarity measure and does not require asking auditors numeric queries. The PAC-style fairness generalization bound in our work falls under the framework of \emph{probably approximately metric-fairness} due to \cite{YonaR18}. However, their work assumes a pre-specified fairness metric and i.i.d. data from a distribution, while we establish our generalization through a sequence of adaptive fairness violations feedback over time. \cite{KimRR18} study a group-fairness relaxation of individual fairness, which requires that similar subpopulations are treated similarly. They do not assume a pre-specified metric for their offline learning problem, but they do assume a metric oracle that returns numeric distance values on random pairs of individuals.  \cite{subjectiveFairness} study an offline learning problem with subjective individual fairness, in which the algorithm tries to elicit subjective fairness feedback from human judges by asking them questions of the form ``should this pair of individuals be treated similarly or not?" Their fairness generalization takes a different form, which involves taking averages over both the individuals and human judges. We aim to provide a fairness generalization guarantee that holds for almost all individuals from the population.

Our technique of combining the loss and the constraint violation into a Lagrangian loss is very related to the technique used in the online convex optimization with long term constraints \cite{jenatton2016adaptive, 0002N20, YuNW17, CaoL19, MahdaviJY12}. Similarly as in our setting, they are interested in choosing some point $x^t$ in each round and incur some loss $f^t(x^t)$ where $f^t$ is chosen by the adversary. Given some collection of constraints $\{g_i\}_{i=1}^m$, the goal is to simultaneously ensure the total violation of the constraints $\sum_{t=1}^t \sum_{i=1}^m g_i(x^t)$ is sublinear and to achieve sublinear regret against any fixed point $x^*$ in hindsight that satisfies all the constraints --- $\sum_{i=1}^m g_i(x^*) \le 0$. However, in most of these settings, the constraints are initially known to the learner, and the algorithm requires a projection into the feasible region. This is different than our setting, where we have no knowledge of what the space of fair policies looks like, as the auditor cannot enunciate what the fairness metric is. The only exception among the works cited above is the work of \citet{CaoL19} and \citet{MahdaviJY12}. They consider a bandit feedback setting where the constraints are not known initially and only the max violation of the constraints with respect to the chosen point is revealed in each round --- i.e. $\max_{i \in [m]} g_i(x^t)$. Our setting is still different in that we do not receive the amount of violation in each round but only the point's feasibility (i.e. $\ind(\max_{i \in [m]} g_i(x^t) \le 0)$) and one of the violated constraints chosen arbitrarily. Furthermore, in their setting, the the point chosen in each round $x^t$ is a $d$-dimesnional vector (i.e. $x^t \in \cR^d$), whereas we imagine the policy chosen in each round comes from the simplex of some hypothesis class $\Delta \cF$, which is often much bigger than $\cR^d$. In that sense, our use of the Follow-The-Perturbed-Leader approach to make the overall algorithm oracle-efficient is novel.

Our techniques in establishing generalization bounds by leveraging the regret guarantees of online algorithms are closely related to classical online to batch conversion schemes (most notably \cite{HelmboldWarmuth, CesaBianchi04}). However, since our fairness violation loss function is threshold-based, and hence not linear with respect to the base classifiers over which the deployed policy is defined, we cannot apply standard analysis regarding the fairness error of the average policy (\cite{CesaBianchi04}), since, in the worst case, this error rate could be equal to the sum of the error rates of all deployed policies during the run of the online algorithm. Another classical alternative is to randomly select one of the policies in the run of the algorithm (\cite{HelmboldWarmuth}). However, in order to establish a high probability bound, this results in a rate incorporating a linear dependence on the inverse of the failure probability, which we prefer to avoid. In this regard, our generalization technique is novel in its ability to incorporate knowledge about the specifics of the fairness loss in relating the overall number of fairness violations encountered during the run of the algorithm to the probability of encountering slightly larger violations when deploying the average policy over its run.

\section{Model and Preliminaries}
We define the instance space to be $\cX$ and its label space to be $\cY$. Throughout this paper, we will restrict our attention to binary labels, that is $\cY=\{0,1\}$. Sometimes, we assume the
existence of an underlying (but unknown) distribution $\cD$ over $\cX\times\cY$. We write $\cF: \cX \to \cY$ to denote the hypothesis class and assume that $\cF$ contains a constant hypothesis --- i.e. there exists $f$ such that $f(x)=0$ (and/or 1) for all $x \in \cX$. 
Also, we allow a convex combination of hypotheses for the purpose of randomizing the prediction and denote the simplex of hypotheses by $\Delta \cF$. For consistency, we say $f \in \cF$, which maps to $\cY = \{0,1\}$, is a \emph{hypothesis} and $\pi \in \Delta \cF$, which is a mixture of some hypotheses and hence maps to $[0,1]$, a \emph{policy}. For each prediction $\hat{y} \in \cY$ and its true label $y \in \cY$, there is an associated misclassification loss, $\ell(\hat{y}, y) = \ind[\hat{y} \neq y]$. For simplicity, we overload the notation and write 
\[
\ell(\pi(x), y)  = (1-\pi(x)) \cdot y  + \pi(x) \cdot (1-y)= \E_{f \sim \pi}[\ell(f(x), y)].
\]
Note that the loss is linear in $\pi$. Given $\pi_1$ and $\pi_2$ and some mixing probability $p$, define $\pi_3(x) = p \pi_1(x) + (1-p)(\pi_2(x))$. Then, it is immediate that 
\begin{align*}
    \ell(\pi_3(x), y) = p \ell(\pi_1(x), y) + (1-p) \ell(\pi_2(x), y).
\end{align*}

\subsection{Online Batch Classification}
Here, we describe the vanilla online batch classification setting, as we will try to reduce the problem we are interested in to this setting. In each round $\round=1, \dots, \totalRound$, a learner deploys a policy $\pi^\round \in \Delta \cF$. Upon seeing the deployed policy $\pi^\round$, the environment chooses a batch of $\totalSubRound$ individuals, $ (x^\round_\subRound,y^\round_\subRound)_{\subRound=1}^\totalSubRound$. As the environment can adversarially and adaptively choose this batch of individuals, we can think of this as the environment strategically choosing $z_{\batch}^\round = (x^t, y^t)$ where
we write $\bx^\round = (x^\round_\subRound)_{\subRound=1}^\totalSubRound$ and $\by^\round = (y^\round_\subRound)_{\subRound=1}^\totalSubRound$ for simplicity. Note that the strategy space for the environment is
\[
\cZ_\batch^\round = \cX^k \times \cY^k.
\]
The history in each round then will consist of everything observed by the learner up to, but not including, round $t$:
\[
    h^t = ((\pi^1, z^1), \dots, (\pi^{t-1}, z^{t-1})).
\]
The space of such histories is denoted by $\cH_\batch^t = (\Delta \cF \times \cZ_{\batch})^{t-1}$. A learner $\Algorithm: \cF^*_{\batch} \to \Delta \cF$ is then defined to be a mapping from history to a policy:
\[
    \pi^t = \Algorithm(h^{t-1}).
\]

Given some loss that is calculated in each round $t$, the learner cannot hope to upper-bound the loss by itself over all rounds $t=1, \dots, T$, due to the adversarial nature of the environment. Therefore, the learner can only hope to minimize its regret with respect to some fixed policy it could have used.
\begin{definition}
Fix the adversary's strategy space $\cZ$. With respect to some baseline $Q \subseteq \Delta \cF$ and some loss $\onlineLoss: \Delta \cF \times \cZ \to \R$, we say learner $\Algorithm$'s regret with respect to adaptively and adversarially chosen sequence of $(z^t)_{t=1}^T$ is 
\[
    \sum_{\round=1}^\totalRound \onlineLoss\left(\pi^\round, z^\round\right) - \min_{\pi^* \in Q} \sum_{\round=1}^\totalRound \onlineLoss\left(\pi^*, z^\round\right).
\]
\end{definition}

In this vanilla online batch classification setting, performance is measured solely using the misclassification loss:
\begin{definition}[Misclassification Loss]
The (batch) misclassification loss $\misclassLoss$ is 
\[
\misclassLoss(\pi, z^t) =  \sum_{\subRound=1}^{\totalSubRound} \ell(\pi(x^\round_{\subRound}), y^\round_{\subRound}).
\]
\end{definition}
In other words, the goal in this setting is to come up with an learner $\Algorithm$ such that against any adaptively and adversarially chosen $(z^t)_{t=1}^T$, we can achieve 
\[
\sum_{\round=1}^\totalRound \misclassLoss\left(\pi^\round, z^\round\right) - \min_{\pi^* \in Q} \sum_{\round=1}^\totalRound \misclassLoss\left(\pi^*, z^\round\right) = o(T).
\]
Often, when learner $\Algorithm$ can guarantee that the regret is sublienar as above, it is said to be a no-regret learner. Examples of such no-regret learners include exponential weights \citep{FreundS97, cesa, arora2012multiplicative} and Follow-the-Perturbed-Leader strategies \cite{SyrgkanisKS16, ftpl2}. 

\subsection{Online Fair Batch Classification}
As opposed to only attempting to minimize its misclassification regret, we also want to make sure that the deployed policies $(\pi^t)_{t=1}^T$ satisfy individual fairness constraints (i.e. each policy $\pi^t$ treats similar individuals similarly according to some fairness metric $d: \cX \times \cX \to \R$). 

\begin{definition}[$\alpha,\beta$-fairness]
Assume $\alpha,\beta > 0$. A policy $\pi \in \Delta \cF$ is said to be $\alpha$-fair on pair $(x,x')$, if \[
|\pi(x) - \pi(x')| \le d(x,x') + \alpha.
\]
We say policy $\pi$'s $\alpha$-fairness violation on pair $(x,x')$ is \[\fviolation_\alpha(\pi, (x,x'))=\max(0, |\pi(x) - \pi(x')| - d(x,x') - \alpha).\]
A policy $\pi$ is said to be $(\alpha, \beta)$-fair on distribution $\cD$ over $\cX\times\cY$, if
\[
\Pr_{(x,x')\sim\cD\vert_{\cX}\times\cD\vert_{\cX}}[\vert\pi(x)-\pi(x')\vert > d(x,x')+\alpha] \leq \beta.
\]
\end{definition} 

In order to find such fair policies, we rely on an auditor who can give feedback by pointing out when a pair of two similar individuals are not treated similarly according to metric $d$. However, unlike \cite{GillenJKR18}, we make no parametric assumption on the metric $d$ nor do we require it to be a proper metric (i.e. it doesn't need to satisfy the triangle inequality). The only requirement regarding $\metric$ in our work is that it is non-negative and symmetric\footnote{From a technical standpoint, there is no necessity in our framework for $\metric$ to be symmetric. However, we incorporate this assumption as we interpret $\metric$ as a distance function.}. In addition to removing the parametric assumption on the metric $d$, we further require the auditor to output only one arbitrary pair where a fairness violation has occurred as opposed to reporting all violations.
\begin{definition}[Auditor]
An auditor $\Auditor_{d,\alpha}$, who can have its own internal state, takes in a \emph{reference set} $S \subseteq \cX$ and a policy $\pi$. Then, it outputs $\pair$ which is either $\nullPair$ or a pair of indices from the provided reference set to denote that there is an $\alpha$-fairness violation on that pair.
For some $x=(x_1, \dots x_k)$,
\[
    \Auditor_{d, \alpha}(x, \pi) = 
        \begin{cases}
            \pair=(\pair_1, \pair_2) & \text{if} \quad \exists \pair_1,\pair_2 \in [k]. \pi(x_{\pair_1}) - \pi(x_{\pair_2}) > d(x_{\pair_1}, x_{\pair_2}) + \alpha\\
            \nullPair & \text{otherwise}\\
        \end{cases}
\]
If there exist multiple pairs on which there is an $\alpha$-violation, the auditor can choose one arbitrarily. We will elide $d$ and write $\Auditor_{\alpha}$, as we will only focus on the case where $d$ is fixed. 
\end{definition}
\begin{remark}
We emphasize that the assumptions on the auditor here are greatly relaxed in comparison to the assumptions made in \cite{GillenJKR18}, which require that the auditor outputs whether the policy is $0$-fair (i.e. with no slack) on all pairs exactly. Furthermore, the auditor from \cite{GillenJKR18} can only handle Mahalanobis distances. In our setting, because of the internal state of the auditor, the auditor does not have to be a fixed function but rather can be adaptively changing in each round. Our argument actually never relies on the fact the distance function $\metric$ stays the same throughout rounds, meaning all our results extend to the case where the distance function governing the fairness constraints is changing every round. For simplicity, we focus on the case where $d$ is fixed.
\end{remark}

The order in which the learner, the environment, and the auditor interact is as follows. In each round $t=1, \dots, T$, a learner deploys a policy $\pi^t \in \Delta \cF$. Then, a batch of $k$ individuals $(x^t,y^t) = ((x^t_i)_{i \in [k]}, (y^t_i)_{i \in [k]})$ arrives. The auditor, upon inspecting $(\pi^t, x^t)$, provides a fairness feedback $\rho^t \in [\totalSubRound]^2 \cup \{\nullPair\}$ which may be a pair of indices for which the deployed policy $\pi^t$ is treating unfairly or null if there does not exist any such pair. As the auditor can essentially choose its reported pair arbitrarily among all pairs on which a violation exists, we can actually fold the auditor into the environment. That is -- the environment choose $z_{\fairbatch}^t = (x^t, y^t, \rho^t)$ simultaneously, meaning the strategy space of the environment here is \[
\cZ_\fairbatch=(\cX^k \times \cY^k \times ([\totalSubRound]^2 \cup \{\nullPair\})).
\]
Similarly as in the vanilla online batch classification setting, the history of the interaction is now described as 
\[
    h^t = ((\pi^1, z^1), \dots, (\pi^{t-1}, z^{t-1})).
\]
where $z^t \in \cZ_{\fairbatch}$ and the space of such histories is then $\cH_\fairbatch^t = (\Delta \cF \times \cZ_{\fairbatch} )^{t-1}$. A learner $\Algorithm$ is still defined to be a mapping from a history to a policy:
\[
    \pi^t = \Algorithm(h^{t-1}).
\]

In addition to the misclassification loss, the learner is also has penalized for unfairness, using the unfairness loss.
\begin{definition}[Unfairness Loss]
The $\alpha$-fairness loss $\fairLoss_\alpha$ is
\[
\fairLoss_\alpha(\pi, z^\round) = \begin{cases} 
\ind\left[\fviolation_\alpha(\pi, (x^t_{\rho^t_1}, x^t_{\rho^t_2})) > 0\right]
& \text{if $\pair^\round = (\pair^\round_1, \pair^\round_2)$}\\
0 & \text{otherwise}
\end{cases}
\]
\end{definition}
In other words, the learner incurs misclassification loss $\misclassLoss(\pi^t, z^t)$\footnote{We overload the notation for $\misclassLoss$ and write $\misclassLoss(\pi, ((x, y), \rho)) = \misclassLoss(\pi, (x,y))$.} and unfairness loss $\fairLoss(\pi^t, z^t)$ in each round $t$. Note that unlike the fairness loss defined in \cite{GillenJKR18} which counts the total number of pairs with fairness violations, the fairness loss here is either 0 or 1 depending on the existence of such pair in this setting. We compare this online batch classification setting with fairness constraints to the vanilla online batch classification setting in Figure \ref{fig:comparison-fair-batch-and-batch}.

\begin{figure}
\begin{minipage}[t]{.53\textwidth}
\begin{algorithm}[H]
\caption{Online Fair Batch Classification $\fairbatch$}
\SetAlgoLined
 \For{$\round=1,\dots,\totalRound$}{
    Learner deploys $\pi^\round$\\
    Environment chooses $(\bx^\round, \by^\round)$\\
    Environment chooses the pair $\pair^\round$\\
    $z^\round = ((\bx^\round, \by^\round), \pair^\round)$ \\
    Learner incurs misclassfication loss $\misclassLoss(\pi^\round, z^\round)$\\
    Learner incurs fairness loss $\fairLoss(\pi^\round, z^\round)$\\
    Learner receives $(x^t, y^t)$ and $\rho^t$
  }
\end{algorithm}
\end{minipage}
\hfill
\begin{minipage}[t]{.43\textwidth}
\begin{algorithm}[H]
\caption{Online Batch Classification $\batch$}
\SetAlgoLined
 \For{$\round=1,\dots,\totalRound$}{
    Learner deploys $\pi^\round$\\
    Environment chooses $z^\round = (\bx^\round, \by^\round)$\\
    Learner incurs misclassification loss $\misclassLoss(\pi^t, z^\round)$\\
    Learner receives $(x^t, y^t)$.
  }
\end{algorithm}
\end{minipage}
\caption{Comparison between Online Fair Batch Classification and Online Batch Classification: each is summarized by the interaction between the learner and the environment: $(\Delta \cF \times \cZ_{\fairbatch})^\totalRound$ and  $(\Delta \cF \times \cZ_{\batch})^\totalRound$ where $\cZ_{\fairbatch} = \cX^\totalSubRound \times \cY^\totalSubRound \times ([k]^2 \cup \{null\})$ and $\cZ_{\batch} = \cX^\totalSubRound \times \cY^\totalSubRound$.}\label{fig:comparison-fair-batch-and-batch}
\end{figure}

Finally, the baseline $Q_\alpha$ that we compete against will be all policies that are $\alpha$-fair on $\bx^\round$ for all $\round \in [\totalRound]$:
\[
Q_\alpha = \{\pi \in \Delta \cF: \text{$\pi$ is $\alpha$-fair on $\bx^\round$ for all $\round \in [\totalRound]$}\}.
\]
Given some fixed trade-off slack $\epsilon \in (0, \alpha)$ and an auditor $\Auditor_{\alpha}$, our goal is to provide a learner $\Algorithm$ such that for any adversarially and adaptively chosen $((x^t, y^t))_{t=1}^T$, regret with respect to each of misclassification and unfairness loss is sublinear against $Q_{\alpha-\epsilon}$:
\begin{enumerate}
    \item Misclassification regret:
    \[\sum_{\round=1}^\totalRound \misclassLoss\left(\pi^\round, z^\round\right) - \min_{\pi^* \in Q_{\alpha-\epsilon}} \sum_{\round=1}^\totalRound \misclassLoss\left(\pi^*, z^\round\right) = o(T).\]
    \item Unfairness regret: \[\sum_{\round=1}^\totalRound  \fairLoss_{\alpha}(\pi^\round, z^\round) - \min_{\pi^* \in Q_{\alpha-\epsilon}}\sum_{\round=1}^\totalRound  \fairLoss_{\alpha}(\pi^*, z^\round) = \sum_{\round=1}^\totalRound  \fairLoss_{\alpha}(\pi^\round, z^\round) = o(T).\]
\end{enumerate}
where $z^t = (x^t, y^t, \Auditor_\alpha(x^t, \pi^t))$. Because $\pi^* \in Q_{\alpha-\epsilon}$ must be $\alpha$-fair, guaranteeing sublinear fairness regret is equivalent to guaranteeing the overall fairness loss is sublinear.

\section{Lagrangian Regret}
As we wish to achieve no-regret with respect to both the misclassification and fairness loss, it is natural to consider a hybrid loss that combines them together. In fact, we define a round-based Lagrangian loss and show that regret with respect to our Lagrangian loss also serves as an upperbound for the misclassification and the unfairness regret multipled by some parameter that balances the misclassification and unfairness losses in the Lagrangian loss.  

Then, we show how to achieve no regret with respect to the Lagrangian loss by reducing the problem to an online batch classification where there is no fairness constraint. For concreteness, we show how to leverage exponential weights in order to achieve sublinear misclassification regret and fairness loss. 

\subsection{Lagrangian Formulation}\label{subsec:lagr-form}
Here we present a hybrid loss that we call \emph{Lagrangian loss} that combines the misclassification loss and the fairness violation in round $t$.

\begin{definition}[Lagrangian Loss]
\label{def:Lag-Loss}
We say that the $C$-Lagrangian loss of $\pi$ is \[\Lagr_{C}\left(\pi, \left((\bx^\round,\by^\round), \pair^\round\right) \right) = \sum_{\subRound=1}^{\totalSubRound} \ell\left(\pi\left(x^\round_{\subRound}\right), y^\round_{\subRound}\right) + \begin{cases} 
          C \left(\pi(x^\round_{\pair_1}) - \pi(x^\round_{\pair_2}) \right) &  \pair^\round = (\pair_1, \pair_2) \\
          0 & \pair^\round = \nullPair
       \end{cases}
\]
\end{definition}

Given an auditor $\Auditor_\alpha$ that is tasked with detecting any $\alpha$-fairness violation, we can simulate the online fair batch classification setting with an auditor $\Auditor_{\alpha}$ by setting the pair $\pair_{\Auditor_{\alpha}}^\round = \Auditor_{\alpha}(\bx^\round, \pi^t)$.

Now, we show that the Lagrangian regret serves as an upper bound for the sum of the $\alpha$-fairness loss (with some multiplicative factor that depends on $C$ and $\epsilon$) and the misclassification loss regret with respect to $Q_{\alpha-\epsilon}$.
\begin{theorem}
\label{thm:fair-misclass-lagr}
Given an auditor $\Auditor_\alpha$, fix any sequence $((x^t, y^t))_{t=1}^T$, $(\pi^t)_{t=1}^T$, and $\rho^t_{\Auditor_\alpha} = \Auditor_{\alpha}(x^t, \pi^t)$ for each $t \in [T]$. Then, the following holds for any $\epsilon \in [0, \alpha]$ and $\pi^* \in Q_{\alpha-\epsilon}$ simultaneously:
\begin{align*}
    & C\epsilon \sum_{\round=1}^\totalRound  \fairLoss_{\alpha}(\pi^\round, z^t) +  \left(\sum_{\round=1}^\totalRound \misclassLoss(\pi^t, z^t) - \sum_{\round=1}^\totalRound \misclassLoss(\pi^*, z^t)\right) \\
    &\le \sum_{\round=1}^\totalRound \Lagr_{C}(\pi^\round, z^t) -  \sum_{\round=1}^\totalRound \Lagr_{C}(\pi^*, z^t)
\end{align*}
where $z^t = ((x^t, y^t), \rho_{\Auditor_\alpha}^t)$ for each $t \in [T]$.
\end{theorem}
\begin{proof}
Fix $\epsilon \in [0,\alpha]$. Fix any $(\alpha-\epsilon)$-fair policy $\pi^* \in Q_{\alpha-\epsilon}$.
Note that for any round $t$ where $\rho^t_{\Auditor_\alpha} \neq \nullPair$, we have
\begin{align*}
    \pi^*(x^\round_{\pair^\round_1}) - \pi^*(x^\round_{\pair^\round_2}) \le \metric(x^\round_{\pair^\round_1}, x^\round_{\pair^\round_2}) + \alpha- \epsilon \quad&\Rightarrow\quad-\left(\pi^*(x^\round_{\pair^\round_1}) - \pi^*(x^\round_{\pair^\round_2}) - \metric(x^\round_{\pair^\round_1}, x^\round_{\pair^\round_2}) -\alpha \right)\ge \epsilon\\
    \pi^t(x^\round_{\pair^\round_1}) - \pi^t(x^\round_{\pair^\round_2}) > \metric(x^\round_{\pair^\round_1}, x^\round_{\pair^\round_2}) + \alpha \quad&\Rightarrow\quad
    \pi^t(x^\round_{\pair^\round_1}) - \pi^t(x^\round_{\pair^\round_2}) - \metric(x^\round_{\pair^\round_1}, x^\round_{\pair^\round_2}) -\alpha > 0
\end{align*}
because $\pi^*$ is $(\alpha-\epsilon)$-fair on this pair and $\pi^t$ is not $\alpha$-fair on $(x^t_{\rho^t_1}, x^t_{\rho^t_2})$.

Then we can show
\begin{align*}
    &\sum_{\round=1}^{\totalRound} \Lagr_{C}(\pi^\round, z^t) - \Lagr_{C}(\pi^*, z^t) \\
    &= \sum_{\round=1}^{\totalRound} \sum_{\subRound=1}^{\totalSubRound} \ell\left(\pi^\round \left(x^\round_{\subRound}\right), y^\round_{\subRound}\right) - \ell\left(\pi^* \left(x^\round_{\subRound}\right), y^\round_{\subRound}\right) + \sum_{\round \in [\totalRound]: \pair_{\Auditor_\alpha}^\round \neq \nullPair} C\left(\pi^t(x^\round_{\pair^\round_1}) - \pi^t(x^\round_{\pair^\round_2})  \right)\\
    &-C\left(\pi^*(x^\round_{\pair^\round_1}) - \pi^*(x^\round_{\pair^\round_2}) \right)\\
    &= \sum_{\round=1}^{\totalRound} \sum_{\subRound=1}^{\totalSubRound} \ell\left(\pi^\round \left(x^\round_{\subRound}\right), y^\round_{\subRound}\right) - \ell\left(\pi^* \left(x^\round_{\subRound}\right), y^\round_{\subRound}\right) + \sum_{\round \in [\totalRound]: \pair_{\Auditor_\alpha}^\round \neq \nullPair} C\left(\pi^t(x^\round_{\pair^\round_1}) - \pi^t(x^\round_{\pair^\round_2}) - \metric(x^\round_{\pair^\round_1}, x^\round_{\pair^\round_2}) -\alpha \right)\\
    &-C\left(\pi^*(x^\round_{\pair^\round_1}) - \pi^*(x^\round_{\pair^\round_2}) - \metric(x^\round_{\pair^\round_1}, x^\round_{\pair^\round_2}) -\alpha \right)\\
    &\ge \sum_{\round=1}^{\totalRound}\sum_{\subRound=1}^{\totalSubRound} \ell\left(\pi^\round \left(x^\round_{\subRound}\right), y^\round_{\subRound}\right) - \ell\left(\pi^* \left(x^\round_{\subRound}\right), y^\round_{\subRound}\right) + \sum_{\round \in [\totalRound]: \pair_{\Auditor_\alpha}^\round \neq \nullPair} C \epsilon\\
    &= \sum_{\round=1}^{\totalRound} \sum_{\subRound=1}^{\totalSubRound}\ell\left(\pi^\round \left(x^\round_{\subRound}\right), y^\round_{\subRound}\right) - \ell\left(\pi^* \left(x^\round_{\subRound}\right), y^\round_{\subRound}\right) + C\epsilon \sum_{\round=1}^\totalRound  \fairLoss_{\alpha}(\pi^\round, ((x^t, y^t), \rho_{\Auditor_\alpha}^t))
\end{align*}
\end{proof}

By considering the above theorem statement with $\epsilon=0$, we can always bound the misclassification regret with the Lagrangian regret:
\begin{align*}
\max_{\pi^* \in Q_{\alpha}}\left(\sum_{\round=1}^\totalRound \misclassLoss(\pi^t, z^t) - \sum_{\round=1}^\totalRound \misclassLoss(\pi^*, z^t)\right) &\le \max_{\pi^* \in Q_{\alpha}} \sum_{\round=1}^\totalRound \Lagr_{C}(\pi^\round, z^t) -  \sum_{\round=1}^\totalRound \Lagr_{C}(\pi^*, z^t) \\
&\le \max_{f^* \in \cF} \sum_{\round=1}^\totalRound \Lagr_{C}(\pi^\round, z^t) -  \sum_{\round=1}^\totalRound \Lagr_{C}(\pi^*, z^t) 
\end{align*}
 
However, note that we cannot always hope to bound the fairness loss with the Lagrangian regret because the misclassification regret may be negative. It alludes to the fact that it is not sufficient to simply come up with a way to bound the Lagrangian regret to be sublinear in $T$. In fact, we have to tune
$C$ according to the exact Lagrangian regret guarantee we wish to get: we want to set $C$ to be large enough so as to give more emphasis to the fairness loss. At the same time, we cannot set it too high, as $C$ also controls the range of the Lagrangian loss and the regret guarantees usually has a linear dependence on the range of the loss. In the next section, we will show exactly how to set $C$ so that we can achieve no regret with respect to each of the unfairness and misclassification losses simultaneously.

\section{Achieving No Regret Simultaneously}
\label{sec:noregret}
In Section \ref{subsec:online-batch-reduction}, we present an efficient reduction from the setting of online batch classification \emph{with} fairness constraints to that of online batch classification \emph{without} constraints. This reduction to the online batch classification without the fairness constraints is not necessary, as the Lagrangian loss is already linear in the first argument, $\pi$. However, we go through this reduction, as it lets us apply off-the-shelf no-regret algorithms for the online learning setting without fairness constraints.

Then, combining our reduction to the online batch classification without fairness constraints, exponential weights approach, and $C$ that is appropriately set with respect to the regret guarantee of exponential weights, we show how to bound each of the misclassification regret and unfairness regret with $O(T^\frac{3}{4})$ in Section \ref{subsec:exp-weights}.

Finally, in Section \ref{subsec:ftpl}, we show how we can use the Follow-The-Perturbed-Leader approach, specifically that of \citet{SyrgkanisKS16}, can be utilized to construct an algorithm which operates in an oracle-efficient manner.

\subsection{Reduction to Online Batch Classification without Fairness Constraints}
\label{subsec:online-batch-reduction}
Here we show how to reduce the online batch fair classification problem to the online batch classification problem in an efficient manner. Once this reduction is complete, we can leverage any online batch algorithm $\Algorithm_{\batch}((\pi^\tau, (\bx'^\tau, \by'^\tau))_{\tau=1}^{\round})$ as a black box in order to achieve sublinear Lagrangian regret. At a high level, our reduction involves just carefully transforming our online fair batch classification history up to $\round$, $(\pi^\tau, ((\bx^\tau, \by^\tau), \pair_{\Auditor}^\tau))_{\tau=1}^{\round} \in (\Delta  \cF \times \cZ_{\fairbatch})^\round$ into some fake online batch classification history $\left(\pi^\tau, (\bx'^\tau, \by'^\tau)\right)_{\tau=1}^{\round} \in (\Delta  \cF \times \cZ_{\batch})^\round$ and then feeding this artificially created history to a no-regret learner $\Algorithm_\batch$ for the online batch classification setting. 

Without loss of generality, we assume that $C$ is an integer; if it's not, then take the ceiling. Now, we describe how the transformation of the history works. For each round $\round$, whenever $\pair^\round = (\pair^\round_1, \pair^\round_2)$, we add $C$ copies of each of $(x^\round_{\pair^\round_1},0)$ and $(x^\round_{\pair^\round_2},1)$ to the original pairs to form $\bx'^\round$ and $\by'^\round$. In order to keep the batch size the same across each round, even if $\pair^\round = \nullPair$, we add $C$ copies of each of $(\arbitraryInstance, 0)$ and $(\arbitraryInstance, 1)$ where $\arbitraryInstance$ is some arbitrary instance in $\cX$. We describe this process in more detail in Algorithm \ref{alg:reduction}. 

\begin{algorithm}[t]
\caption{Reduction from Online Fair Batch Classification to Online Batch Classification, $R_{C}((x^t, y^t), \rho^t)$}
\label{alg:reduction}
\SetAlgoLined
 \SetKwInput{KwInput}{Parameters}
 \KwInput{$C$}
 \SetKwInput{KwInput}{Input}
 \KwInput{$(x^t, y^t), \rho^t$}
    \If{$\pair^\round = (\pair_1^\round, \pair_2^\round)$}{
        \For{$i = 1, \dots, C$}{
            $x^\round_{\totalSubRound + i} = x^\round_{\pair_1^\round} \quad \text{and}\quad y^\round_{\totalSubRound + i} = 0$;\\
            $x^\round_{\totalSubRound + C + i} = x^\round_{\pair_2^\round} \quad \text{and}\quad y^\round_{\totalSubRound + C + i} = 1$;\\
            }
    }
    \Else{
        \For{$i = 1, \dots, C$}{
            $x^\round_{\totalSubRound + i} = \arbitraryInstance \quad \text{and}\quad y^\round_{\totalSubRound + i} = 0$;\\
            $x^\round_{\totalSubRound + C + i} = \arbitraryInstance \quad \text{and}\quad y^\round_{\totalSubRound + C + i} = 1$;\\
        }
    }
    $\bx'^\round = (x^\round_{\subRound})_{\subRound=1}^{k+2C} \quad \text{and} \quad \by'^\round = (y^\round_{\subRound})_{\subRound=1}^{k+2C};$
    
    \SetKwInput{KwOutput}{Output}
    \KwOutput{$({x'}^t, {y'}^t)$}
\end{algorithm}

This reduction essentially preserves the regret, which we formally state in Lemma \ref{lem:reduction_regret}.
\begin{lemma}
\label{lem:reduction_regret}
For any sequence of $(\pi^\round)_{\round=1}^\totalRound$,  $((x^t, y^t))_{\round=1}^\totalRound$, $(\rho^t)_{t=1}^T$, and $\pi^* \in \Delta  \cF$, we have
 \[\sum_{\round=1}^\totalRound \Lagr_{C}(\pi^t, z^\round) - \sum_{\round=1}^\totalRound \Lagr_{C}(\pi^*, z^\round) = \sum_{\round=1}^{\totalRound} \misclassLoss(\pi^t, ({x'}^t, {y'}^t)) - \sum_{\round=1}^{\totalRound} \misclassLoss(\pi^*, ({x'}^t, {y'}^t))
 \]
 where $z^t = ((x^t, y^t), \rho^t)$ and $({x'}^t, {y'}^t) = R_C((x^t, y^t), \rho^t)$.
\end{lemma}
\begin{proof}
It is sufficient to show that in each round $\round \in [T]$, \[
\Lagr_{C,\alpha}(\pi^t, z^\round) - \Lagr_{C,\alpha}(\pi^*, z^\round) =  \sum_{\subRound=1}^{\totalSubRound+2C} \ell(\pi^t(x^t_{\subRound}), y^\round_{\subRound}) - \sum_{\subRound=1}^{\totalSubRound+2C} \ell(\pi^*(x^t_{\subRound}), y^\round_{\subRound})
\]
First, assume $\pair^\round = (\pair^\round_1, \pair^\round_2)$.
\begin{align*}
    &\Lagr_{C}(\pi^t, z^\round) - \Lagr_{C}(\pi^*, z^\round) \\
    &= \left(\sum_{\subRound=1}^{\totalSubRound} \ell(\pi^\round(x^\round_{\subRound}), y^\round_{\subRound}) + C(\pi^\round(x^\round_{\pair^\round_1}) - \pi^\round(x^\round_{\pair^\round_2}))\right) - \left(\sum_{\subRound=1}^{\totalSubRound} \ell(\pi^*(x^\round_{\subRound}), y^\round_{\subRound}) + C(\pi^*(x^\round_{\pair^\round_1}) - \pi^*(x^\round_{\pair^\round_2}))\right)\\
    &= \left(\sum_{\subRound=1}^{\totalSubRound} \ell(\pi^\round(x^\round_{\subRound}), y^\round_{\subRound}) + \left(\sum_{\subRound=1}^{C} \ell(\pi^\round(x^\round_{\pair^\round_1}), 0)
    +\sum_{\subRound=1}^{C} \ell(\pi^\round(x^\round_{\pair^\round_2}), 1) - C \right) \right)\\
    &- \left(\sum_{\subRound=1}^{\totalSubRound} \ell(\pi^*(x^\round_{\subRound}), y^\round_{\subRound}) + \left(\sum_{\subRound=1}^{C} \ell(\pi^*(x^\round_{\pair^\round_1}), 0)
    +\sum_{\subRound=1}^{C} \ell(\pi^*(x^\round_{\pair^\round_2}), 1) - C\right)  \right)\\
    &= \sum_{\subRound=1}^{\totalSubRound+2C} \ell(\pi^\round(x'^\round_{\subRound}), y'^\round_{\subRound}) - \sum_{\subRound=1}^{\totalSubRound+2C} \ell(\pi^*({x'}^t_{\subRound}), y'^\round_{\subRound}),
\end{align*}

The second equality follows from the fact that for any $\pi$ and $x$,
\[\ell(\pi(x), 0) = \pi(x)\quad\text{and}\quad \ell(\pi(x), 1) = 1-\pi(x).\]

If $\pair^\round = \nullPair$, the same argument as above applies; the only difference is that all the $\pi^t(\arbitraryInstance)$ cancel with each other because the number of copies with label 0 is exactly the same as that of label 1.
\end{proof}

\subsection{Exponential Weights}
\label{subsec:exp-weights}
It is well known that for linear loss functions, exponential weights with appropriately tuned learning rate $\gamma$ can achieve no regret \citep{FreundS97,cesa,arora2012multiplicative}. We will first describe the setting of the exponential weights. We will then show how the setting of online batch classification we are interested in can be cast to this setting.

For each round $t=1, \dots, T$:
\begin{enumerate}
    \item The learner chooses a distribution $p^t=(p^t_1, \dots, p^t_N)$ over $N$ experts.
    \item The adversary, \emph{with full knowledge of $p^t$}, chooses loss vector $m^t = (m^t_1, \dots, m^t_N)$ where $m^t_i \in [-B, B]$ for each $i \in [N]$.
    \item The learner suffers $p^t \cdot m^t$.
\end{enumerate}

We emphasize that since the adversary chooses $m^t$ with full knowledge of $p^t$, $m^t$ can be chosen as a function of $p^t$.

Exponential weights is defined as $p^t = \frac{1}{N}$ for $t=1$ and for any $t\ge 2$ 
\[
    \hat{p}^{t+1}_i = (1-\gamma m^t_i) p^t_i
\]
and \[
    p^{t+1} = \frac{\hat{p}^{t+1}_i}{\sum_{i \in [N]}\hat{p}^{t+1}_i}.
\]

Then, we have the following guarantee on the regret of exponential weights:
\begin{theorem}[\citet{arora2012multiplicative}]
\label{thm:exp-weights}
Exponential weights with learning rate $\gamma$ has the following guarantee: for any sequence of $(m^t)_{t=1}^T$ and any $i \in [N]$,
\[
 \sum_{t=1}^T p^t \cdot m^t \le  \sum_{t=1} m^t_i + B\left(\gamma \totalRound  + \frac{\ln(|\cF|)}{\gamma}\right).
\]
In other words, with learning rate $\gamma= \sqrt{\frac{\ln( N)}{\totalRound}}$, the regret is $2B\sqrt{\ln(N)T}$.
\end{theorem}

We can easily reduce the online classification setting to that of exponential weights method. Each $\pi^t$ that we deploy can be represented as a probability distribution $p^t=(p^t_f)_{f \in \cF}$ over each $f \in \cF$: for any $x \in \cX$,
\begin{align*}
    \pi^t(x) = \sum_{f \in \cF} p^t_f f(x).
\end{align*}

If we use $m^t_f = \misclassLoss(f, ({x'}^t, {y'}^t))$ for every $f \in \cF$. Then, we have for any $({x'}^t, {y'}^t) \in (\cX^{k+2C} \times \cY^{k+2C})$,
\begin{align*}
    \misclassLoss(\pi^t(x), {x'}^t, {y'}^t) = \sum_{f \in \cF} p^t_f \misclassLoss(f, ({x'}^t, {y'}^t))= p^t \cdot m^t.
\end{align*}
Putting everything together, we have \begin{enumerate}
    \item The learner deploys $\pi^t$ where the associated probability distribution over $\cF$ is $p^t = (p^t_f)_{f \in \cF}$.
    \item The adversary, with the knowledge of $p^t$, selects $({x'}^t, {y'}^t)$ which determines $m^t = (m^t_f)_{f\in \cF}$ where
    $m^t_f = \misclassLoss(f, ({x'^t},{y'}^t))$. Remember that $({x'}^t, {y'}^t) = R_C(x^t, y^t, \Auditor_\alpha(x^t, \pi^t))$ is a function of $\pi^t$.
    \item Learner suffers 
    \begin{align*}
        p^t \cdot m^t = \misclassLoss(\pi^t(x), {x'}^t, {y'}^t).
    \end{align*}
\end{enumerate}

Note that $m^t_f \in [-(k+2C), k+2C]$ for any $f \in \cF$. Although $({x'}^t, {y'}^t)$ is formed as a function of $\pi^t$, the exponential weights approach still allows the adversary to form $m^t$ as a function of $p^t$ or equivalently $\pi^t$. Therefore, we can use the regret guarantee of the exponential weights and appropriately set $C$ to achieve sublinear fairness loss and misclassification regret.

\begin{theorem}
\label{thm:exp-guarantee}
If $C = \max(T^{\frac{1}{4}},k)$, exponential weights guarantees the following: for any adaptively and adversarially chosen $((x^t, y^t))_{t=1}^T$, we have,
\begin{align*}
\sum_{\round=1}^\totalRound  \fairLoss_{\alpha}(\pi^\round, z^\round) &\le \frac{6}{\alpha}\sqrt{\ln(|\cF|)T} + \frac{k}{\alpha}T^{\frac{3}{4}}\\
\sum_{t=1}^T \misclassLoss(\pi^t, z^t) - \min_{\pi^* \in Q_{\alpha}}\sum_{t=1}^T \misclassLoss(\pi^*, z^t) &\le 6\sqrt{\ln(|\cF|)}T^{\frac{3}{4}}
\end{align*}
where $z^t = ((x^t, y^t), \rho^t_{\Auditor_\alpha})$ and $\rho^t_{\Auditor_\alpha} = \Auditor(x^t, \pi^t)$. In other words, misclassification regret and unfairness regret are both bounded by $O(T^{\frac{3}{4}})$.
\end{theorem}
\begin{proof}
First, we apply the regret guarantee of exponential weights. Theorem \ref{thm:exp-weights} gives us that 
\begin{align*}
    \sum_{t=1}^T \misclassLoss(\pi^t, ({x'}^t, {y'}^t)) - \min_{f^* \in \cF} \sum_{t=1}^T \misclassLoss(f^*, ({x'}^t, {y'}^t)) \le 2(k+2C)\sqrt{\ln(|\cF|)T}.
\end{align*}
because $\misclassLoss(\pi^t, ({x'}^t,{y'}^t)) \in [-(k+2C), k+2C]$. Note that $({x'}^t, {y'}^t) = R_C(x^t, y^t, \Auditor_\alpha(x^t, \pi^t))$ is a function of $\pi^t$, but as we emphasized before, the regret guarantee still holds.

Combining our previous lemmas and theorems, we have for any $((x^t, y^t))_{t=1}^T$ and hence $z^t = (x^t, y^t, \Auditor_{\alpha}(x^t, \pi^t))$ for $t\in [T]$,
\begin{align*}
    &C\epsilon \sum_{\round=1}^\totalRound  \fairLoss_{\alpha}(\pi^\round, z^t) + \left(\sum_{\round=1}^\totalRound \misclassLoss(\pi^t, z^t) - \min_{\pi^* \in Q_{\alpha-\epsilon}} \sum_{\round=1}^\totalRound \misclassLoss(\pi^*, z^t)\right) \\
    &\le \sum_{\round=1}^\totalRound \Lagr_{C, \alpha}(\pi^\round, z^t) - \min_{\pi^* \in Q_{\alpha-\epsilon}} \sum_{\round=1}^\totalRound \Lagr_{C, \alpha}(\pi^*, z^t) &&\text{Theorem \ref{thm:fair-misclass-lagr}} \\
    &\le \sum_{\round=1}^\totalRound \Lagr_{C, \alpha}(\pi^\round, z^t) - \min_{\pi^* \in \Delta \cF} \sum_{\round=1}^\totalRound \Lagr_{C, \alpha}(\pi^*, z^t) &&Q_{\alpha-\epsilon} \subseteq \Delta \cF\\
    &= \sum_{\round=1}^\totalRound \misclassLoss(\pi^t, ({x'}^t, {y'}^t)) - \min_{\pi^* \in \Delta \cF} \sum_{t=1}^T\misclassLoss(\pi^*, ({x'}^t, {y'}^t)) &&\text{Lemma \ref{lem:reduction_regret}}\\
    &= \sum_{\round=1}^\totalRound \misclassLoss(\pi^t, ({x'}^t, {y'}^t)) - \min_{f^* \in \cF} \sum_{t=1}^T\misclassLoss(f^*, ({x'}^t, {y'}^t)) &&\text{Linearity of $\sum_{t=1}^T\misclassLoss(\cdot, ({x'}^t, {y'}^t))$}\\
    &\le 2(k+2C)\sqrt{\ln(|\cF|)T}.
\end{align*}
simultaneously for all $\epsilon \in [0,\alpha]$.

For the fairness loss, consider $\epsilon = \alpha$, and fix any $\pi^* \in Q_{\alpha-\epsilon}$. We then have
    \begin{align*}
        &\sum_{t=1}^T \misclassLoss(\pi^t, (x^t, y^t)) - \sum_{\round=1}^\totalRound \misclassLoss(\pi^*, (x^t, y^t)) + C\alpha \sum_{\round=1}^\totalRound  \fairLoss_{\alpha}(\pi^\round, z^\round) \le 2(k+2C)\sqrt{\ln(|\cF|)T}\\
        &\Rightarrow C\alpha \sum_{\round=1}^\totalRound  \fairLoss_{\alpha}(\pi^\round, z^\round) \le 2(k+2C)\sqrt{\ln(|\cF|)T} + kT\\
        &\Rightarrow C\alpha \sum_{\round=1}^\totalRound  \fairLoss_{\alpha}(\pi^\round, z^\round) \le 6C\sqrt{\ln(|\cF|)T} + kT\\
        &\Rightarrow \sum_{\round=1}^\totalRound  \fairLoss_{\alpha}(\pi^\round, z^\round) \le \frac{6}{\alpha}\sqrt{\ln(|\cF|)T} + \frac{kT}{\alpha C}\\
        &\Rightarrow \sum_{\round=1}^\totalRound  \fairLoss_{\alpha}(\pi^\round, z^\round) \le \frac{6}{\alpha}\sqrt{\ln(|\cF|)T} + \frac{k}{\alpha}T^{\frac{3}{4}}
    \end{align*}
    where the first implication follows from the fact that \[
    \sum_{\round=1}^{\totalRound} \sum_{\subRound=1}^{\totalSubRound}\ell\left(\pi^\round \left(x^\round_{\subRound}\right), y^\round_{\subRound}\right) - \ell\left(\pi^* \left(x^\round_{\subRound}\right), y^\round_{\subRound}\right) \ge -kT.
    \]
    
    As for the misclassification regret, consider $\epsilon = 0$.
    \begin{align*}
        &\sum_{t=1}^T \misclassLoss(\pi^t, z^t) - \min_{\pi^* \in Q_{\alpha}}\sum_{t=1}^T \misclassLoss(\pi^*, z^t)\\ 
        &\le \sum_{\round=1}^{\totalRound} \Lagr_{C}(\pi^\round, z^t) - \min_{\pi^* \in \Delta \cF} \Lagr_{C}(\pi^*, z^t)\\
        &\le 2(k+2C)\sqrt{\ln(|\cF|)T} \le 6\sqrt{\ln(|\cF|)}T^{\frac{3}{4}}.
    \end{align*}
\end{proof}
We remark that $C$ is set to be exactly $\max(k, T^{\frac{1}{4}})$ so as to bound both the misclassification regret and the fairness loss with $O(T^\frac{3}{4})$, but other trade-off between the two is still possible.

\subsection{Follow-The-Perturbed-Leader}
\label{subsec:ftpl}
Running exponential weights as in Section \ref{subsec:exp-weights} in general cannot be done in an efficient manner, as such methods need to calculate the loss for each hypothesis $f \in \cF$ or for each possible labeling in the case $\cF$ has bounded VC-dimension. In this section, we aim to design an algorithm for our problem setting that is oracle-efficient. 

Specifically, we show how the algorithm proposed by \citet{SyrgkanisKS16} can be used as an $\Algorithm_{\batch}$ to achieve sublinear regret in the online batch classification setting in an oracle efficient manner. However, we remark that our approach of leveraging $\ftpl$ requires us to relax how adaptive the environment can be in terms of choosing $(x^t, y^t)$. Previously, we allowed the environment to choose $(x^t, y^t)$ with the full knowledge of the deployed policy $\pi^t$. Here, we make an assumption that $(x^t, y^t)$ can be formed as a function of $h^t=((\pi^1, z^1), \dots, (\pi^{t-1}, z^{t-1}))$ but \emph{not} $\pi^t$.

Let us now consider the setting that \citet{SyrgkanisKS16} study. They consider an adversarial contextual learning setting where in each round $\round$, the learner randomly deploys some hypothesis\footnote{They refer to this as a policy, but we say hypothesis just to be consistent with our terminology where a function that maps to $\{0,1\}$ is called hypothesis and policy is reserved for a mixture of hypotheses that maps to $[0,1]$.} $\policy^\round \in \Psi$ where $\Psi: \policyContextClass \to \{0,1\}^k$, and the environment chooses $(\policyContext^\round, w^\round) \in \policyContextClass \times \R^k$, where $k$ indicates the number of possible actions that can be taken for the context $\policyContext^\round$ whose associated loss vector is $w^k$. The only knowledge at round $\round$ \emph{not} available to the environment is the randomness that the learner uses to choose $\policy^\round$, but the environment may know the actual distribution $D^t$ over $\psi^t$ that the learner has in round $t$ just not the realization of it. At the end of the round, the learner suffers some loss $\onlineLoss(\policy^\round, (\policyContext^t, w^\round))$. 

\citet{SyrgkanisKS16} show that, in the small separator setting, they can achieve sublinear regret given that they can compute a separator set prior to learning. We first give the definition of a separator set and then state their regret guarantee. 

\begin{definition}
A set $S=(\policyContext_1, \dots, \policyContext_n)$ is called a \emph{seperator set} for $\policyClass: \policyContextClass \to \{0,1\}^k$ if for any different $\policy$ and $\policy'$ in $\policyClass$, there exists $\policyContext \in S$ such that $\policy(\policyContext) \neq \policy'(\policyContext)$.
\end{definition}

\begin{theorem}[\citet{SyrgkanisKS16}]
\label{thm:context-ftpl-regret}
For any adversarially and adaptively chosen sequence of $(\policyContext^\round, w^\round)_{\round=1}^\totalRound$, $\text{\textsc{context-ftpl}}$ initialized with a separator set $S$ and parameter $\omega$ deploys $(\psi^t)_{t=1}^T$ with the following regret: for any $\psi^* \in \Psi$,
\[
\sum_{\round=1}^{\totalRound} \E_{\psi^t \sim D^t}\left[\onlineLoss(\policy^\round, (\policyContext^\round, w^\round))\right] - \sum_{\round=1}^{\totalRound} \onlineLoss(\policy^*, (\policyContext^\round, w^\round)) \le 4\omega k n \sum_{\round=1}^\totalRound \E_{\psi^t \sim D^t}[\norm{\onlineLoss(\cdot, (\cdot, w^t))}^2_{*}] + \frac{10}{\omega} \sqrt{n k} \ln(\vert \Psi \vert),
\]
where $n = \vert S \vert$, $\norm{\onlineLoss(\cdot,(\cdot, w))}_{*} = \max_{\psi, \policyContext} |\onlineLoss(\psi, (\policyContext, w))|$ and $D^t$ is the implicit distribution over $\psi^t = \ftpl((\policyContext^\tau, w^\tau)_{\tau=1}^{t-1})$ that $\ftpl$ has in each round $t$.
\end{theorem}

Our online batch classification setting can be easily reduced to their setting by simply considering the batch of instances by setting $\policyContext^t = \bx^\round = (x^t_1, \dots, x^t_k)$, meaning we set $\policyContextClass = \cX^k$ and form the associated loss vector as $w^t_i = \frac{1-2y^t_i}{2k}$ for each $i \in [k]$. And we view each hypothesis as $\policy_{f}(x^\round) = (f(x^\round_1), \dots, f(x^\round_\totalSubRound))$. In other words, we can define the hypothesis class induced by $\cF$ as 
\[
\policyClass_{\cF, k} = \left\{\forall f \in \cF: (x_{\subRound})_{\subRound=1}^\totalSubRound \mapsto (f(x_{\subRound}))_{\subRound=1}^\totalSubRound \right\}.
\]
Note that $|\cF| = |\policyClass_{\cF, k} |$. And we can use the following linear loss
\begin{align*}
    \onlineLoss_{\batch, k}\left(\psi_f, (\policyContext^t, w^t)\right)  &= \left\langle \psi_f(\policyContext^t), w^t \right\rangle.
\end{align*}
Note that by construction, the difference in $L_{\batch, k}$ over $(\policyContext^t,w^t)$ between $\psi_f$ and $\psi_{f'}$ preserves the difference in misclassification loss over $({x'}^t,{y'}^t)$ between $f$ and $f'$:
\begin{lemma}
Write $k'=k+2C$.
\label{lem:fptl-reduction-loss}
 \[
 2k'\left(\onlineLoss_{\batch, k'}\left(\psi_f , (\policyContext^t, w^t)\right) - \onlineLoss_{\batch, k'}\left(\psi_{f'}, (\policyContext^t, w^t)\right) \right) = \sum_{\subRound=1}^{\totalSubRound'} \ell(f(x_{\subRound}), y_{\subRound}) - \sum_{\subRound=1}^{\totalSubRound'} \ell(f'({x'}^t_{\subRound}), {y'}^t_{\subRound})
 \]
\end{lemma}
\begin{proof}
\begin{align*}
&2k'\left(\onlineLoss_{\batch, k'}\left(\psi_f , (\policyContext^t, w^t)\right) - \onlineLoss_{\batch, k'}\left(\psi_f , (\policyContext^t, w^t)\right) \right) \\
&= \left\langle (f({x'}^t_\subRound))_{\subRound=1}^\totalSubRound, 1-2{y'}^t \right\rangle - \left\langle (f'({x'}_\subRound))_{\subRound=1}^\totalSubRound, 1-2{y'}^t \right\rangle\\ 
&= \left(\sum_{\subRound=1}^{k'} (1-f({x'}^t_\subRound)) \cdot {y'}^t_\subRound  + f({x'}^t_\subRound) \cdot (1-{y'}^t_\subRound)\right)- \left(\sum_{\subRound=1}^{k'} (1-f'({x'}_\subRound)) \cdot {y'}^t_\subRound  + \pi({x'}^t_\subRound) \cdot (1-y^t_\subRound)\right)\\
&=\sum_{\subRound=1}^{\totalSubRound'} \ell(f({x'}^\round_{\subRound}), {y'}^\round_{\subRound}) - \sum_{\subRound=1}^{\totalSubRound'} \ell(f'({x'}^\round_{\subRound}), {y'}^\round_{\subRound})
\end{align*}
\end{proof}
\citet{SyrgkanisKS16} assume an optimization oracle with respect to $L$
\begin{align*}
    M_L(\{(\policyContext^j, y^j)\}_{j=1}^P) = \arg\min_{\policy \in \policyClass} \sum_{j=1}^P L(\policy, (\policyContext^j, w^j))
\end{align*}
which in our case corresponds to the following oracle:
\begin{align*}
    M_{\onlineLoss_{\batch, k}}(\{(\policyContext^j, w^j)\}_{j=1}^D) = \psi_f \quad \text{where } f = \arg\min_{f \in \cF} \sum_{j=1}^D \sum_{i=1}^k f(x^j_i) w_i^j.
\end{align*}

Note that this is equivalent to a weighted empirical risk minimization oracle:
\begin{align*}
    &\arg\min_{f \in \cF} \sum_{j=1}^D \sum_{i=1}^k f(x^j_i) w_i^j \\
    &= \arg\min_{f \in \cF} \sum_{j=1}^D \sum_{i=1}^k f(x^j_i) p^j_i\left(\frac{1-2y^j_i}{2k}\right)\\
    &= \arg\min_{f \in \cF} \sum_{j=1}^D \sum_{i=1}^k  p^j_i \ell(f, (x^j_i, y^j_i))
\end{align*}
where $y^j_i = -\text{sign}(w^j_i)$ and $p^j_i = \frac{w^t_i}{y^t_i}$ for each $j \in [D], i \in [k]$. We remark that not all $w^j$ that we feed to the oracle will be of the form $\{\pm \frac{1}{2k}\}$ and $p^j_i = 1$ because $\ftpl$ requires calling the oracle not just on the set of $\policyContext^t, w^t$ that we create from $x^t$ and $y^t$ --- for stability reasons, it also adds in contexts from the separator set and associate each of those contexts with a random vector where each coordinate is drawn from the Laplace distribution. 

Furthermore, we can turn any separator set $S \subseteq \cX$ for $\cF$ into an equal size separator set $S' \subseteq \policyContextClass$ for $\policyClass$. In fact, the construction is as follows:
\[
S' = \{ \forall x \in S: \policyContext_x = (x, \arbitraryInstance, \dots, \arbitraryInstance)\},
\]
where $\arbitraryInstance$ is some arbitrary instance in $\cX$.

\begin{lemma}
If $S$ is the separator set for $\cF$, then $S'$ is a separator set for $\policyClass_\cF$.
\end{lemma}
\begin{proof}
Fix any $f$ and $f'$ where $f \neq f'$. Note that by definition of $S$, there exists $x \in S$ such that $f(x) \neq f'(x)$. As a result, $\policy_{f}(\policyContext_x) \neq \policy_{f'}(\policyContext_x)$ as $(f(x), q, \dots, q) \neq (f(x'), q, \dots, q)$.
\end{proof}

Because the loss we use is linear, we take a slightly different view on the interaction between the learner and the environment. Instead of the learner sampling a hypothesis $\psi_f^t$ and having the no-regret guarantee in expectation over the randomness of sampling the hypothesis, we imagine the learner playing the actual distribution over $\psi_f^t$ it has at round $t$. We note this distribution over $\psi_f^t$ as $D^t$ and write the loss experienced by deploying a \emph{policy} $D^t$ as 
\[
    \onlineLoss_{\batch, k'}(D^t, (\policyContext^t, w^t)) = \E_{\psi^t_f \sim D^t}[\langle \psi_f^t(\policyContext), w^t\rangle] = \langle \E_{\psi^t_f \sim D^t}[\psi_f^t(\policyContext)], w^t\rangle  = \E_{\psi^t_f \sim D^t}[\onlineLoss_{\batch, k'}(\psi^t_f, (\policyContext^t, w^t)].
\]
However, $\ftpl$ never explicitly keeps track of the distribution $D^t$ but only allows a way to sample from this distribution. Therefore, we form an empirical distribution $\tilde{D}^t$ by calling into $\ftpl$ $E$ many times to approximate $D^t$ --- i.e. we write $\tilde{D}^t$ to denote the uniform distribution over $\{\psi^t_{f^t_1}, \dots, \psi^t_{f^t_1}\}$ where $\psi^t_{f^t_j}$ is the result of our $j$th call to $\ftpl$ in round $t$. We describe the overall reduction to $\ftpl$ more formally in Algorithm \ref{alg:ftpl-reduction}. 

\begin{algorithm}[H]
\SetAlgoLined
 \SetKwInput{KwInput}{Parameters}
 \KwInput{$C, \omega, E$}
  \SetKwInput{KwInput}{Input}
 \KwInput{Separating set $S$}
 Create $S' = \{ \forall x \in S: \policyContext_x = (x, \arbitraryInstance, \dots, \arbitraryInstance)\}$ where $v \in \cX$ is chosen arbitrarily.\\
 Initialize $\ftpl$ with $S'$ and $\omega$.\\
\For{$t=1, \dots, T$}{
    $\pi^t$ is deployed.\\
    Environment, \emph{without} the knowledge of $\pi^t$, chooses $(x^t, y^t)$.\\
    Auditor chooses $\rho^t_{\Auditor_\alpha} = \Auditor_\alpha(x^t, \pi^t).$
        
    \tcp{Incur misclassification and fairness loss}
    Incur misclassification loss $\sum_{k=1}^k \ell(\pi^t(x^t), y^t)$\\
    Incur fairness loss $\fairLoss(\pi^t, \rho^t_{\Auditor_\alpha})$.
        
    \tcp{Reduction to online batch classification setting and to $\ftpl$'s setting}
    $({x'}^t, {y'}^t)= R_C((x^t, y^t), \rho^t_{\Auditor_\alpha})$.\\
    $\policyContext^{t} = {x'}^t$ and $w^t_i = \frac{1-2{y'}^t_i}{2(k+2C)}$ for each $i \in [k + 2C]$.\\
    Update history $h^{t+1} = \{(\policyContext^\tau, w^\tau)\}_{\tau=1}^t$.\\
    \For{$j \in [E]$}{
        $\policy^{t+1,j}_{f} = \ftpl(h^{t+1})$.\\
        Set $f^{t+1}_j = f$ from $\policy^{t+1,j}_{f}$. 
    }
    $\pi^{t+1}$ be a uniform distribution over $\{f^{t+1}_1, \dots, f^{t+1}_E\}$.
   }
  \caption{Reduction to $\ftpl$}\label{alg:ftpl-reduction} 
\end{algorithm}

Unlike before, we have the environment choose $(x^t, y^t)$ without the knowledge of $\pi^t$. This is so that the randomness used to form $\pi^t$ by running $\ftpl$ multiple times is not revealed to the environment. If the randomness is revealed, it is possible that the auditor can take advantage of the direction in which the empirical distribution that we deploy is off from the distribution maintained by $\ftpl$ --- more specifically, we would have to take a union bound over all possible $x^t$ and $y^t$ that the environment can choose after the environment knows how $\pi^t$ has been chosen, which we can't do because there are infinitely many possible $(x,y)$.

Instead, we could argue about the concentration of the policy $\pi^t$ itself to its expected value instead of over the loss first and use the concentration over the distribution of hypotheses to argue for the concentration of the loss. However, the loss needs to average over each hypothesis or each possible labeling in the case of bounded VC-dimension, so our estimation error in the distribution over each hypothesis will add up over each $f \in \cF$, resulting in linear dependence on $|\cF|$, or over each possible labeling induced by $\cF$ incurring estimation error linear in $O(k^V)$ where $V$ is the VC-dimension of $\cF$.

Hence, by hiding the randomness used to sample the empirical distribution from the environment (i.e. $(x^t, y^t)$ has to be chosen without access to $\tilde{D}^t$), the only thing in the loss $\onlineLoss_{\batch, k+2C}$ that is adaptive to the deployed policy is the auditor. However, the auditor only has $k^2+1$ options (i.e. choose a pair out of $k^2$ pair or output null), so we can easily union bound over these options.

\begin{lemma}
With probability $1-\delta$ over the randomness of $\tilde{D}^t$ (i.e. sampling $f^t_j$ for $j \in [E]$), we have
\begin{align*}
    \left| \onlineLoss_{\batch, k+2C}(\tilde{D}^t, (\policyContext^t, w^t)) - \onlineLoss_{\batch, k+2C}(D^t, (\policyContext^t, w^t)) \right| \le \sqrt{\frac{\ln(\frac{2T (k^2+1)}{\delta})}{2E}}
\end{align*}
for every round $t \in [T]$ where $\psi^t = \ftpl((\policyContext^\tau, w^\tau)_{\tau=1}^{t-1})$ is distributed according $D^t$. $\tilde{D}^t$ is the uniform distribution over $\{\psi^{t,j}_{f}\}_{j \in [E]}$. $(\policyContext^t, w^t)$ is determined according to $(x^t, y^t)$ and $\rho_{\Auditor_\alpha}^t = \Auditor(x^t, \pi^t)$ where $\pi^t$ is the corresponding policy for $\tilde{D}^t$ that is deployed in round $t$ as shown in Algorithm \ref{alg:ftpl-reduction}.
\end{lemma}
\begin{proof}
Fix the round $t \in [T]$. Note that
\begin{align*}
    &\onlineLoss_{\batch, k+2C}(\tilde{D}^t, (\policyContext^t, w^t)) - \onlineLoss_{\batch, k+2C}(D^t, (\policyContext^t, w^t)) \\
    &=\left\langle \E_{\psi^t_f \sim \tilde{D}^t}[\psi^t_f(\policyContext^t)],w^t\right\rangle - \left\langle \E_{\psi^t_f\sim D^t}[\psi^t_f(\policyContext^t)] ,w^t\right\rangle \\
    &= \frac{1}{E} \sum_{j \in [E]}  \sum_{i=1}^{k+2C} f^t_j({x'}^t_i) w^t_i -  \E_{\psi^t_f \sim D^t}\left[\sum_{i=1}^{k+2C}f({x'}^t_i) w^t_i\right] \\
    &= \left\langle \E_{\psi^t_f \sim \tilde{D}^t}[\psi^t_f(\policyContext^t)],w^t\right\rangle - \left\langle \E_{\psi^t_f\sim D^t}[\psi^t_f(\policyContext^t)] ,w^t\right\rangle \\
    &= \frac{1}{E} \sum_{j \in [E]}  \sum_{i=1}^{k} f^t_j(x^t_i) w^t_i + C \left(f^t_j({x'}^t_{k+1}) w^t_{k+1} + f^t_j({x'}^t_{k+C+1}) w^t_{k+C+1}\right) \\
    &-\E_{\psi_f \sim D^t}\left[\sum_{i=1}^{k}f(x^t_i) w^t_i + C \left(f(x^t_{\rho^t_1}) w^t_{k+1} + f(x^t_{\rho^t_2}) w^t_{k+C+1}\right)\right]. 
\end{align*}
Note that just by the construction of $\policyContext^t$ and ${x'}^t, {y'}^t$, we know that there are only $k^2+1$ possible options for $({x'}^t_{k+1}, {x'}_{k+C+1})$. More specifically, if $\rho^t_{\Auditor_{\alpha}} \neq \nullPair$, then ${x'}^t_{k+1} = x^t_{\rho^t_1}$ and ${x'}_{k+C+1} = x^t_{\rho^t_2}$  where $\rho^t_1 \in [k]$ and $\rho^t_2 \in [k]$. If $\rho^t_{\Auditor_\alpha} = \nullPair$, then $({x'}^t_{k+1}, {x'}_{k+C+1}) = (v,v)$ always. Furthermore, by construction, we have $w^t_{k+1} = \frac{-1}{2(k+2C)}$ and $w^t_{k+C+1} = \frac{1}{2(k+2C)}$ always.

Write \begin{align*}
    V_j &= \sum_{i=1}^{k} f^t_j(x^t_i) w^t_i + C \left(f^t_j(x^t_{\rho^t_1}) w^t_{k+1} + f^t_j(x^t_{\rho^t_2}) w^t_{k+C+1}\right)\\
    V &= \E_{\psi^t_f \sim D^t}\left[\sum_{i=1}^{k}f(x^t_i) w^t_i + C \left(f(x^t_{\rho^t_1}) w^t_{k+1} + f(x^t_{\rho^t_2}) w^t_{k+C+1}\right)\right]
\end{align*}

Note that $\E[V_j] = V$ for each $j \in [E]$. Therefore, union bounding over all possible $\rho^t \in [k]^2 \cup \{\nullPair\}$ with Chernoff bound, we have
\[
    \Pr_{(f^t_j)_{j \in [E]}}\left(\left|  \frac{1}{E}\sum_{j \in [E]}V_j - V\right| \ge \sqrt{\frac{\ln(\frac{2(k^2+1)}{\delta})}{2E}}  \right) \le \delta.
\]

Union bounding over all round $t \in [T]$, we have with probability $1-\delta$, 
\begin{align*}
    \left|\onlineLoss_{\batch, k+2C}(\tilde{D}^t, (\policyContext^t, w^t)) - \onlineLoss_{\batch, k+2C}(D^t, (\policyContext^t, w^t))\right| \le \sqrt{\frac{\ln(\frac{2T (k^2+1)}{\delta})}{2E}}.
\end{align*}
\end{proof}

Now, we can combine all the arguments we have developed so far in order to prove that Algorithm \ref{alg:ftpl-reduction} achieves sublinear fairness loss and misclassification regret:
\begin{theorem}
\label{thm:online-batch-regret-ftpl}
Set $C = \max(k, T^\frac{2}{9}), E=T,$ and $\omega = n^{\frac{-1}{4}}{k'}^\frac{-3}{4}T^\frac{-1}{2} \ln(|\cF|)^\frac{1}{2}$ where $n$ is the size of the separator set $S$. Algorithm \ref{alg:ftpl-reduction} guarantees that with probability $1-\delta$, the following holds true:
\begin{align*}
\sum_{\round=1}^\totalRound  \fairLoss_{\alpha}(\pi^\round, z^\round) &\le \frac{1}{\alpha} O \left(n^\frac{3}{4} \ln(|\cF|)^\frac{1}{2}T^\frac{5}{9} + \sqrt{T\ln\left(\frac{Tk}{\delta}\right)}  + k T^{\frac{7}{9}}\right)\\
\sum_{t=1}^T \misclassLoss(\pi^t, z^t) - \min_{\pi^* \in Q_{\alpha}}\sum_{t=1}^T \misclassLoss(\pi^*, z^t) &\le O\left(n^\frac{3}{4} \ln(|\cF|)^\frac{1}{2}T^{\frac{7}{9}} + \sqrt{\ln\left(\frac{Tk}{\delta}\right)} T^{\frac{13}{18}}\right)
\end{align*}
    where $z^t = ((x^t, y^t), \rho^t_{O_{\alpha}})$ and $\rho^t_{O_{\alpha}}=\Auditor_{\alpha}(x^t, \pi^t)$. In other words, the misclassification regret and fairness loss is both bounded by $O(T^{\frac{7}{9}})$ with high probability.
\end{theorem}
\begin{proof}
\begin{align*}
    &\sum_{\round=1}^\totalRound \Lagr_{C,\alpha}(\pi^t, z^\round) - \min_{\pi^* \in Q_{\alpha-\epsilon}}\sum_{\round=1}^\totalRound \Lagr_{C,\alpha}(\pi^*, z^\round)\\
    &\le\sum_{\round=1}^\totalRound \Lagr_{C,\alpha}(\pi^t, z^\round) - \min_{\pi^* \in \Delta \cF}\sum_{\round=1}^\totalRound \Lagr_{C,\alpha}(\pi^*, z^\round) \\
    &=\sum_{t=1}^T \sum_{i=1}^{k+2C} \ell(\pi^t({x'}^t), {y'}^t_i) - \min_{\pi^* \in \Delta\cF}\sum_{t=1}^T \sum_{i=1}^{k+2C} \ell(\pi^*({x'}^t), {y'}^t_i) && \text{Lemma \ref{lem:reduction_regret}}\\
    &=\sum_{t=1}^T \sum_{i=1}^{k+2C} \ell(\pi^t({x'}^t), {y'}^t_i) - \min_{f^* \in \cF}\sum_{t=1}^T \sum_{i=1}^{k+2C} \ell(f^*({x'}^t), {y'}^t_i) &&
    \parbox[t]{0.4\textwidth}{
                    Optimal solution over linear objective
                    must happen at the support}
\end{align*}

Then, applying Lemma \ref{lem:fptl-reduction-loss} yields with probability $1-\delta$,
\begin{align*}
    &=(k+2C)\left(\sum_{\round=1}^{\totalRound} \onlineLoss_{\batch, k+2C}(\tilde{D}^t, (\policyContext^\round, w^\round)) - \min_{\psi^* \in \Psi_{\batch, k+2C}}\sum_{\round=1}^{\totalRound} \onlineLoss_{\batch, k+2C}(\policy^*, (\policyContext^\round, w^\round))\right)\\
    &\le(k+2C)\Bigg(\sum_{\round=1}^{\totalRound} \onlineLoss_{\batch, k+2C}(D^t, (\policyContext^\round, w^\round)) - \min_{\psi^* \in \Psi_{\batch, k+2C}}\sum_{\round=1}^{\totalRound} \onlineLoss_{\batch, k+2C}(\policy^*, (\policyContext^\round, w^\round))\\
    &+ T\sqrt{\frac{\ln(\frac{2T (k^2+1)}{\delta})}{2E}} \Bigg)
\end{align*}
Notice that $|\onlineLoss_{\batch,k+2C}(\psi, (\policyContext, w^t))| \in [-\frac{1}{2},\frac{1}{2}]$ for any $\psi, x$ and $t \in [T]$ because by construction $w^t_i \in \{ \pm \frac{1}{2(k+2C)} \}$. In other words, we have $\E_{\psi \sim D^t}\left[||\onlineLoss_{\batch, k+2C}( \psi, (\cdot, w^t))||^2_{*}\right] \le \frac{1}{4}$. Theorem \ref{thm:context-ftpl-regret} gives us that a sequence of distribution $(D^t)_{t=1}^T$ achieves the following, where $D^t$ is equivalent to the distribution of $\ftpl(((\policyContext^\tau, w^\tau))_{\tau=1}^{t-1})$:
\begin{align*}
    &\sum_{\round=1}^{\totalRound} \onlineLoss_{\batch, k+2C}(D^t, (\policyContext^\round, w^\round)) - \min_{\psi^* \in \Psi_{\batch, k+2C}}\sum_{\round=1}^{\totalRound} \onlineLoss_{\batch, k+2C}(\policy^*, (\policyContext^\round, w^\round)) \\
    &\le \omega k' n T + \frac{10}{\omega} \sqrt{n k'} \ln(\vert \Psi_{\cF, k+2C} \vert).
\end{align*}

Therefore, writing $k'=k+2C$, we have with probability $1-\delta$,
\begin{align*}
&\sum_{\round=1}^\totalRound \Lagr_{C,\alpha}(\pi^t, z^\round) - \min_{\pi^* \in Q_{\alpha-\epsilon}}\sum_{\round=1}^\totalRound \Lagr_{C,\alpha}(\pi^*, z^\round)\\
&\le k' \left( \sum_{\round=1}^{\totalRound} \E_{\psi^t \sim D^t}\left[\onlineLoss_{\batch, k'}(\policy^\round, (\policyContext^\round, w^\round))\right] - \sum_{\round=1}^{\totalRound} \onlineLoss_{\batch, k'}(\policy^*, (\policyContext^\round, w^\round)) + \sqrt{T \frac{\ln(\frac{2T (k^2+1)}{\delta})}{2}}\right)\\
&\le k' \left(\omega k' n T + \frac{10}{\omega} \sqrt{n k'} \ln(\vert \Psi_{\cF, k+2C} \vert) + \sqrt{T \frac{\ln(\frac{2T (k^2+1)}{\delta})}{2}} \right).
\end{align*}

Setting $\omega = n^{\frac{-1}{4}}{k'}^\frac{-3}{4}T^\frac{-1}{2} \ln(|\cF|)^\frac{1}{2}$, we then have
\begin{align*}
    &\sum_{\round=1}^\totalRound \Lagr_{C,\alpha}(\pi^t, z^\round) - \min_{\pi^* \in Q_{\alpha-\epsilon}}\sum_{\round=1}^\totalRound \Lagr_{C,\alpha}(\pi^*, z^\round)\\
    &\le O\left(\left(n^\frac{3}{4} {k'}^{\frac{5}{4}}\ln(|\cF|)^\frac{1}{2} + k'\sqrt{ \ln\left(\frac{Tk}{\delta}\right)} \right)T^{\frac{1}{2}}  \right)
\end{align*}

With the same argument as in the proof of Theorem \ref{thm:exp-guarantee}, 
we get that for $\epsilon=\alpha$ 
\begin{align*}
        &C\alpha \sum_{\round=1}^\totalRound  \fairLoss_{\alpha}(\pi^\round, z^\round) \le O\left(\left(n^\frac{3}{4} {k'}^{\frac{5}{4}}\ln(|\cF|)^\frac{1}{2} + k'\sqrt{ \ln\left(\frac{Tk}{\delta}\right)} \right)T^{\frac{1}{2}}  \right) + kT\\
        &\Rightarrow C\alpha \sum_{\round=1}^\totalRound  \fairLoss_{\alpha}(\pi^\round, z^\round) \le O\left(\left(n^\frac{3}{4} {C}^{\frac{5}{4}}\ln(|\cF|)^\frac{1}{2} + C\sqrt{ \ln\left(\frac{Tk}{\delta}\right)} \right)T^{\frac{1}{2}}  \right) + kT\\
        &\Rightarrow \sum_{\round=1}^\totalRound  \fairLoss_{\alpha}(\pi^\round, z^\round) \le \frac{1}{\alpha}O\left(\left(n^\frac{3}{4} {C}^{\frac{1}{4}}\ln(|\cF|)^\frac{1}{2} + \sqrt{ \ln\left(\frac{Tk}{\delta}\right)} \right)T^{\frac{1}{2}} + \frac{kT}{C}\right) \\
        &\Rightarrow \sum_{\round=1}^\totalRound  \fairLoss_{\alpha}(\pi^\round, z^\round) \le \frac{1}{\alpha} O \left(n^\frac{3}{4} \ln(|\cF|)^\frac{1}{2}T^\frac{5}{9} + \sqrt{T\ln\left(\frac{Tk}{\delta}\right)}  + k T^{\frac{7}{9}}\right).
    \end{align*}
As for the misclassification regret, we have with $\epsilon=0$
    \begin{align*}
        &\sum_{t=1}^T \misclassLoss(\pi^t, z^t) - \min_{\pi^* \in Q_{\alpha}}\sum_{t=1}^T \misclassLoss(\pi^*, z^t) \\
        &\le O\left(\left(n^\frac{3}{4} {k'}^{\frac{5}{4}}\ln(|\cF|)^\frac{1}{2} + k'\sqrt{\ln\left(\frac{Tk}{\delta}\right)} \right)T^{\frac{1}{2}}  \right) \\
        &\le O\left(n^\frac{3}{4} \ln(|\cF|)^\frac{1}{2}T^{\frac{7}{9}} + \sqrt{\ln\left(\frac{Tk}{\delta}\right)} T^{\frac{13}{18}}\right)
    \end{align*}
\end{proof}

We only focus on their small separator set setting, although their transductive setting (i.e. the contexts $(x_t)_{t=1}^T$ are known in advance) should naturally follow as well.

\section{Generalization} \label{sec:gen}

We observe that until this point, all of our results apply to the more general setting where individuals arrive in any adversarial fashion. In order to argue about generalization, in this section, we will assume the existence of an (unknown) data distribution from which individual arrivals are drawn:
\[
\{\{(x^\round_\subRound,y^\round_\subRound)\}_{\subRound=1}^{\totalSubRound}\}_{\round=1}^{\totalRound}\sim_{i.i.d.} {\cD}^{\totalRound\totalSubRound}
\]
In spite of the data being drawn i.i.d., there are two main technical challenges in establishing  generalization guarantee: (1) the auditor's fairness feedback at each round is limited to only incorporate reported fairness violations (as opposed to stronger forms of feedback of, for example, the distances between arriving individuals, etc.), where it only entails a single fairness violation with regards to the policy deployed in that round, and (2) both the deployed policies and the auditor are adaptive over rounds. To overcome these challenges, we will draw a connection between our established regret guarantees in Section \ref{sec:noregret} and the learner's distributional accuracy and fairness guarantees. 

As formerly discussed in Section \ref{sec:introduction}, such strategies of leveraging regret guarantees to prove distributional bounds are known in the online learning literature as online to batch conversion schemes (see, e.g. \cite{CesaBianchi04,HelmboldWarmuth}). As we will see next, while accuracy generalization follows from standard such arguments, the fairness generalization bound we wish to establish does not, and will require a more elaborate technique.  

In particular, we will analyze the generalization bounds for the average policy over rounds.

\begin{definition}[Average Policy]
Let $\pi^t$ be the policy deployed by the algorithm at round $t$. The average policy $\pi^{avg}$ is defined by:
\[
\forall x: \pi^{avg}(x) = \frac{1}{T}\sum_{t=1}^T \pi^t(x).
\]
\end{definition}
We denote the set of policies which are fair on the entire space of pairs of individuals by 
\begin{align*}
    \Pi_{\alpha} &= \{\pi \in \Delta \cF: \text{$\pi$ is ($\alpha,0)$-fair}\}\\
    &= \{\pi \in \Delta \cF: |\pi(x)-\pi(x')| \le d(x,x') + \alpha\quad \forall(x,x')\in\cX^2\}.
\end{align*}
Note that $\Pi_{\alpha} \subseteq Q_\alpha$ always.

\subsection{Accuracy Generalization}
We begin by stating the accuracy generalization guarantee for the average policy.

\begin{restatable}[Accuracy Generalization]{theorem}{thmAvgLoss}
\label{thm:avgloss} 
Suppose we are given an algorithm such that for any sequence $(x^t, y^t)_{t=1}^T$, it produces $(\pi^t)_{t=1}^T$ such that with probability $1-\delta$,
\[
    \sum_{t=1}^T \misclassLoss(\pi^t, (x^t, y^t)) - \min_{\pi^* \in Q_{\alpha}} \sum_{t=1}^T \misclassLoss(\pi^*, x^t, y^t) \le \Regret_{\acc}.
\]

Then, with probabilty $1-\delta$, the misclassification loss of $\pi^{avg}$ is upper bounded by
\begin{align*}
\E_{(x,y)\sim\cD}\left[\ell(\pi^{avg}(x),y)\right] \leq  \min\limits_{\pi^* \in \Pi_{\alpha}}\E_{(x,y)\sim\cD} \left[\ell(\pi(x),y)\right] + \frac{1}{\totalSubRound\totalRound}\Regret_{\acc} + \sqrt{\frac{8\ln\left(\frac{4}{\delta}\right)}{\totalRound}}
\end{align*}
\end{restatable}

The high level proof idea for the accuracy generalization bound of Theorem \ref{thm:avgloss} is as follows. While fixing the randomness of the algorithm, we apply the Azuma's inequality over the randomness of sampling $((x^t, y^t))_{t=1}^T$ and similarly the Chernoff bound for any fixed $\pi^* \in \Pi_{\alpha}$ in order to show that our empirical misclassification regret is concentrated around the true misclassification regret over the distribution $\cD$. Separately, for any fixed $((x^t, y^t))_{t=1}^T$, we know that over the randomness of the algorithm, the misclassification regret must be bounded by $\Regret_{\acc}$. Then, we can union bound to argue that our true misclassification regret must be concentrated around $\Regret_{\acc}$. The full proof is given in Appendix \ref{app:gen}.

\subsection{Fairness Generalization}

A more challenging task is, however, to argue about the fairness generalization guarantee of the average policy. To provide some intuition for why this is the case, let us first attempt to upper bound the probability of running into an $\alpha$-fairness violation by the average policy on a randomly selected pair of individuals:
\begin{observation} \label{obs:basic}
Suppose for all $\round$, $\pi^\round$ is $(\alpha,\beta^\round)$-fair. Then, $\pi^{avg}$ is $\left(\alpha,\sum\limits_{\round=1}^\totalRound \beta^\round\right)$-fair.
\end{observation}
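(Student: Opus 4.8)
The plan is to reduce the statement to a union bound by showing that any pair $(x,x')$ on which $\pi^{avg}$ exhibits an $\alpha'$-fairness violation must be a pair on which \emph{at least one} of the per-round policies $\pi^\round$ also has an $\alpha'$-fairness violation. The driving fact is that $\pi \mapsto \pi(x) - \pi(x')$ is linear, so averaging the policies averages these differences, and the absolute value is convex.

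First I would apply the triangle inequality to the average. Writing $\pi^{avg}(x) - \pi^{avg}(x') = \frac{1}{\totalRound}\sum_{\round=1}^\totalRound \left(\pi^\round(x) - \pi^\round(x')\right)$, convexity of $|\cdot|$ gives
\[
\left|\pi^{avg}(x) - \pi^{avg}(x')\right| \le \frac{1}{\totalRound}\sum_{\round=1}^\totalRound \left|\pi^\round(x) - \pi^\round(x')\right|.
\]
Next I would argue by contrapositive. Suppose $(x,x')$ is a pair on which every $\pi^\round$ is $\alpha'$-fair, i.e. $\left|\pi^\round(x) - \pi^\round(x')\right| \le \metric(x,x') + \alpha'$ for all $\round$. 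Substituting into the displayed bound yields $\left|\pi^{avg}(x) - \pi^{avg}(x')\right| \le \frac{1}{\totalRound}\sum_{\round=1}^\totalRound \left(\metric(x,x') + \alpha'\right) = \metric(x,x') + \alpha'$, so $\pi^{avg}$ is $\alpha'$-fair on $(x,x')$ as well. Equivalently, the violation event for the average is contained in the union of the per-round violation events:
\[
\left\{(x,x') : \left|\pi^{avg}(x) - \pi^{avg}(x')\right| > \metric(x,x') + \alpha'\right\} \subseteq \bigcup_{\round=1}^\totalRound \left\{(x,x') : \left|\pi^\round(x) - \pi^\round(x')\right| > \metric(x,x') + \alpha'\right\}.
\]

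Finally I would close with a union bound over the product distribution $\cD\vert_\cX \times \cD\vert_\cX$: the probability of a violation by $\pi^{avg}$ is at most $\sum_{\round=1}^\totalRound \Pr_{x,x'}\left[\left|\pi^\round(x) - \pi^\round(x')\right| > \metric(x,x') + \alpha'\right]$, and each term is at most $\beta^\round$ by the assumed $(\alpha',\beta^\round)$-fairness of $\pi^\round$. This gives the claimed bound $\sum_{\round=1}^\totalRound \beta^\round$.

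There is no genuine obstacle here; the only point to keep track of is that the \emph{same} fixed threshold $\alpha'$ is used for both $\pi^{avg}$ and every $\pi^\round$, which is precisely what makes the averaging-plus-triangle-inequality step collapse cleanly. I note that the resulting bound is ultimately too weak to be useful, since $\sum_{\round=1}^\totalRound \beta^\round$ can be as large as $\totalRound$ and hence vacuous; this is what motivates trading a small additive slack $\interpolation/\totalRound$ in the fairness parameter for a sharper failure probability in Theorem \ref{thm:gen}. For the Observation as stated, however, the containment argument and a single union bound suffice.
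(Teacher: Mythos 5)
Your proof is correct, and it is exactly the argument the paper intends: the Observation is stated without an explicit proof (only its tightness is demonstrated in Appendix~\ref{app:gen}), and the implicit justification is precisely your convexity-of-$|\cdot|$ containment of the average policy's violation event in the union of the per-round violation events, followed by a union bound. Your argument is also the degenerate form of the covering argument in Lemma~\ref{lem:gen}, which sharpens the failure probability to $\frac{1}{\interpolation}\sum_{\round}\beta^\round$ by paying an additive $\frac{\interpolation}{\totalRound}$ in the violation threshold, exactly as you note.
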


This bound is very dissatisfying, as the statement is vacuous when $\sum_{\round=1}^\totalRound \beta^\round \geq 1$. The reason for such a weak guarantee is that by aiming to upper bound the unfairness probability for the original fairness violation threshold $\alpha$, we are subject to worst-case compositional guarantees\footnote{This is the case where, for every pair of individuals $x,x'$ on which there exists a policy in $\{\pi^1,\dots,\pi^\totalRound\}$ that has an $\alpha$-fairness violation: (1) every policy $\pi\in\{\pi^1,\dots,\pi^\totalRound\}$ that has an $\alpha$-fairness violation on $x,x'$ has a maximal violation (of value 1), (2) all non-violating policies (in $\{\pi^1,\dots,\pi^\totalRound\}$) on $x,x'$ are arbitrarily close to the violation threshold $\alpha$ on this pair, and (3) all of the directions of the policies' predictions' differences on $x,x'$ are correlated (no cancellations when averaging over policies).} in the sense that the average policy may result to have an $\alpha$-fairness violation on any fraction of the distribution (over pairs) where one or more of the deployed policies induces an $\alpha$-fairness violation.
This bound is tight, and we refer the reader to Appendix \ref{app:gen} for the full example.

\paragraph{Interpolating $\alpha$ and $\beta$}
To circumvent the above setback, our strategy will be to relax the target violation threshold of the desired fairness guarantee of the average policy to $\alpha' > \alpha$. How big should we set $\alpha'$? A good intuition may arrive from considering the following thought experiment: assume worst-case compositional guarantees, and then, select a pair of individuals $x,x'$ on which the average policy has an $\alpha'$-fairness violation. We aim to lower bound the number of policies from $\{\pi^1,\dots,\pi^\totalRound\}$ that have an $\alpha$-fairness violation on this pair. As we will see, setting $\alpha'$ to be sufficiently larger will force the number of these policies required to produce an $\alpha'$-fairness violation of the average policy on $x,x'$ to be high, resulting in the following improved bound: 

\begin{restatable}{lem}{lemGen}
\label{lem:gen}
Suppose that for all $\round$, $\pi^\round$ is $(\alpha,\beta^\round)$-fair $(0\leq\beta^\round\leq 1)$. For any integer $\interpolation \leq T$, $\pi^{avg}$ is $\left(\alpha + \frac{\interpolation}{\totalRound},\frac{1}{\interpolation}\sum\limits_{\round=1}^\totalRound \beta^\round\right)$-fair.
\end{restatable}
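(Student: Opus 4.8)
The plan is to reduce a ``soft'' statement about the magnitude of the averaged policy's deviation on a pair to a ``hard'' statement counting how many of the individual policies $\pi^\round$ actually register an $\alpha'$-fairness violation on that pair, and then apply Markov's inequality to this count. The crucial observation that powers the improvement over Observation~\ref{obs:basic} is that relaxing the target threshold from $\alpha'$ to $\alpha' + \interpolation/\totalRound$ forces \emph{many} of the deployed policies to be simultaneously violating on any pair where the average policy violates, so that a union bound (which is what Observation~\ref{obs:basic} effectively uses) can be replaced by a much sharper Markov bound.

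First I would fix a pair $(x,x')$ and use convexity of the absolute value to write
\[
|\pi^{avg}(x) - \pi^{avg}(x')| = \left|\frac{1}{\totalRound}\sum_{\round=1}^\totalRound \left(\pi^\round(x) - \pi^\round(x')\right)\right| \le \frac{1}{\totalRound}\sum_{\round=1}^\totalRound |\pi^\round(x) - \pi^\round(x')|.
\]
Next, by the definition of the $\alpha'$-fairness violation $\fviolation_{\alpha'}(\pi^\round,(x,x'))$, each summand satisfies $|\pi^\round(x)-\pi^\round(x')| \le \metric(x,x') + \alpha' + \fviolation_{\alpha'}(\pi^\round,(x,x'))$. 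Averaging, the event that $\pi^{avg}$ has an $(\alpha'+\interpolation/\totalRound)$-fairness violation on $(x,x')$ implies
\[
\metric(x,x') + \alpha' + \frac{\interpolation}{\totalRound} < \metric(x,x') + \alpha' + \frac{1}{\totalRound}\sum_{\round=1}^\totalRound \fviolation_{\alpha'}(\pi^\round,(x,x')),
\]
i.e. $\sum_{\round=1}^\totalRound \fviolation_{\alpha'}(\pi^\round,(x,x')) > \interpolation$.

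The key step is then to pass from the real-valued sum of violations to a count of violating rounds. Since predictions lie in $[0,1]$ and $\metric \ge 0$, each violation term satisfies $\fviolation_{\alpha'}(\pi^\round,(x,x')) \le 1$, and it is nonzero exactly when $\pi^\round$ has an $\alpha'$-violation on $(x,x')$; hence $\fviolation_{\alpha'}(\pi^\round,(x,x')) \le \ind\!\left(|\pi^\round(x)-\pi^\round(x')| > \metric(x,x')+\alpha'\right)$. Summing, the bound $\sum_\round \fviolation_{\alpha'}(\pi^\round,(x,x')) > \interpolation$ forces $\sum_\round \ind\!\left(\pi^\round\text{ violates on }(x,x')\right) > \interpolation$. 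Taking the expectation of this nonnegative integer-valued random variable over $(x,x')\sim\cD|_\cX\times\cD|_\cX$ gives expectation at most $\sum_\round \beta^\round$ by the $(\alpha',\beta^\round)$-fairness hypothesis, so Markov's inequality yields
\[
\Pr_{(x,x')}\!\left[|\pi^{avg}(x)-\pi^{avg}(x')| > \metric(x,x') + \alpha' + \tfrac{\interpolation}{\totalRound}\right] \le \Pr_{(x,x')}\!\left[\sum_\round \ind\!\left(\pi^\round\text{ violates}\right) > \interpolation\right] \le \frac{1}{\interpolation}\sum_{\round=1}^\totalRound \beta^\round,
\]
which is exactly the claimed $(\alpha' + \interpolation/\totalRound,\ \frac{1}{\interpolation}\sum_\round \beta^\round)$-fairness.

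The only delicate point -- and the conceptual heart of the argument -- is the transition from the averaged \emph{magnitude} of violation to the \emph{number} of violating policies, which relies on the uniform bound $\fviolation_{\alpha'} \le 1$; this is what converts the factor-$\totalRound$ loss of Observation~\ref{obs:basic} into the factor-$1/\interpolation$ savings, at the price of inflating the threshold by $\interpolation/\totalRound$. Everything else is a triangle inequality followed by a single application of Markov, so I do not anticipate further technical difficulties.
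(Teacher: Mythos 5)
Your proof is correct and follows essentially the same route as the paper's: a triangle inequality on the averaged predictions, the observation that each policy's excess over the threshold $\alpha'$ is at most $1$ so that an $(\alpha'+\interpolation/\totalRound)$-violation of $\pi^{avg}$ forces at least $\interpolation$ of the $\pi^\round$ to violate at level $\alpha'$, and then Markov's inequality on the count of violating rounds (which the paper writes out explicitly as an integral over the pair distribution). Your packaging of the counting step via the bound $\fviolation_{\alpha'}\le\ind(\text{violation})$ is just a direct restatement of the paper's contrapositive argument, so there is nothing substantively different to flag.
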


\paragraph{High-Level Proof Idea} Setting $\alpha'=\alpha+\frac{\interpolation}{\totalRound}$ has the following implication: for any pair of individuals $(x,x')$, in order for $\pi^{avg}$ to have an $\alpha'$-fairness violation on $x,x'$, at least $\interpolation$ of the policies in $\{\pi^1,\dots,\pi^\totalRound\}$ must have an $\alpha$-fairness violation on $x,x'$. We will then say a subset $A\subseteq\cX\times\cX$ is $\alpha$-covered by a policy $\pi$, if $\pi$ has an $\alpha$-violation on every element in $A$. We will denote by $A_\interpolation^{\alpha} \subseteq \cX\times\cX$ the subset of pairs of elements from $\cX$ that are $\alpha$-covered by at least $\interpolation$ policies in $\{\pi^1,\dots,\pi^\totalRound\}$. Next, consider the probability space $\cD\vert_{\cX}\times\cD\vert_{\cX}$ over pairs of individuals. The lemma then follows from observing that for any $\interpolation\leq\totalRound$, $\Pr(A_\interpolation^{\alpha}) \leq \frac{1}{\interpolation} \Pr(A_1^{\alpha})$, as this will allow us to upper bound the probability of an $\alpha'$-fairness violation by the stated bound.

\begin{proof}
Fix $\{\pi^\round\}_{\round=1}^\totalRound$, and assume that $\forall \round: \pi^\round$ is $(\alpha,\beta^\round)$-fair. If we set $\interpolation\leq \totalRound$, then we know that
\begin{align}
&\Pr_{x,x'}\left[\left|\pi^{avg}(x)-\pi^{avg}(x')\right|- \metric(x,x')> \alpha+\frac{\interpolation}{\totalRound}\right]\notag \\
&\leq \Pr_{x,x'}\left[\exists \{i_1,\dots,i_\interpolation\} \subseteq [\totalRound],\forall j ,\vert \pi^{i_j}(x)-\pi^{i_j}(x')\vert - \metric(x,x') > \alpha\right]\label{tra:lower} \\
&\leq \frac{1}{\interpolation}\sum\limits_{\round=1}^\totalRound \Pr_{x,x'}\left[\vert\pi^\round(x)-\pi^\round(x')\vert-\metric(x,x')>\alpha\right]\label{tra:covering}\\
&\leq \frac{1}{\interpolation}\sum\limits_{\round=1}^\totalRound \beta^\round\notag
\end{align}

Transition \eqref{tra:lower} is given by the following observation. Fix any $x,x'$ and assume 
\begin{align*}
\left\vert\{\pi^\round:\round\in[\totalRound],\vert\pi^\round(x)-\pi^\round(x')\vert-\metric(x,x')>\alpha\}\right\vert < \interpolation.
\end{align*}
Then, we have 
\begin{align*}
\vert\pi^{avg}(x)-\pi^{avg}(x')\vert-\metric(x,x')&\le \frac{1}{T}\sum_{t=1}^T|\pi^t(x) - \pi^t(x')|-\metric(x,x')\\
&\leq\frac{\interpolation+(\totalRound-\interpolation)\alpha}{\totalRound}\\
&=\alpha+\frac{\interpolation}{\totalRound}-\frac{\alpha \interpolation}{\totalRound} \\
&< \alpha +\frac{\interpolation}{\totalRound}. 
\end{align*}
In other words, we must have that
\[
    \vert\pi^{avg}(x)-\pi^{avg}(x')\vert-\metric(x,x')>  \alpha +\frac{\interpolation}{\totalRound} 
\]
implies
\[
    \left\vert\{\pi^\round:\round\in[\totalRound],\vert\pi^\round(x)-\pi^\round(x')\vert-\metric(x,x')>\alpha\}\right\vert \ge \interpolation.
\]

Transition \eqref{tra:covering} stems from the following argument: for any $x,x'$, denote by \[
V_{x,x'}^{\alpha} := \{t\in[T],\vert\pi^t(x)-\pi^t(x')\vert-\metric(x,x')> \alpha\}
\] 
the rounds where the deployed policy would have an $\alpha$-fairness violation on $x,x'$. We know that
\begin{align*}
&\frac{1}{\interpolation}\sum\limits_{\round=1}^\totalRound \Pr_{x,x'}\left[\vert\pi^\round(x)-\pi^\round(x')\vert-\metric(x,x')>\alpha\right] \\
&= \frac{1}{\interpolation}\sum_{t=1}^\totalRound\int_{x,x'}\ind[\vert \pi^t(x)-\pi^t(x')\vert -\metric(x,x')> \alpha]\cD_{\cX^2}(x,x') \\
&= \frac{1}{\interpolation}\int_{x,x'}\sum_{t=1}^\totalRound\ind[\vert \pi^t(x)-\pi^t(x')\vert -\metric(x,x')> \alpha]\cD_{\cX^2}(x,x')\\
&= \frac{1}{\interpolation} \int_{x,x'}|V_{x,x'}^{\alpha}|\cD_{\cX^2}(x,x') \\
&\geq \int_{x,x'} \ind\left[\vert V_{x,x'}^{\alpha} \vert \geq \interpolation\right]\cD_{\cX^2}(x,x') \\
&= \Pr_{x,x'}\left[\exists \{i_1,\dots,i_\interpolation\} \subseteq [\totalRound],\forall j ,\vert \pi^{i_j}(x)-\pi^{i_j}(x')\vert - \metric(x,x') > \alpha\right],
\end{align*}
where $\cD_{\cX^2}(x,x')$ denotes the probability measure of $x,x'$ defined by $\cD \vert_\cX \times \cD \vert_\cX$. 
\end{proof}

Equipped with Lemma \ref{lem:gen}, we next note that the stochastic assumption allows us to link the distributional fairness guarantees of the deployed policies to the algorithm's regret. The implication of this, which we state next, is that the case of deploying too many highly unfair policies along the interaction must result in a contradiction to the algorithm's proven regret guarantee. Hence, we will be able to bound the sum of the distributional $\alpha$-fairness guarantees of all the policies deployed by the algorithm. 

\begin{restatable}{lem}{lembBoundedsum}
\label{lem:boundedsum}
With probability $1-\delta$, we have
\[ \sum_{\round=1}^\totalRound \beta^\round \leq \Regret_{\fair} + \sqrt{2\totalRound \ln\left(\frac{2}{\delta}\right)}\]
\end{restatable}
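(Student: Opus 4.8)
The plan is to relate each per-round violation probability $\beta^\round$ to the expected fairness loss incurred in that round, and then convert the cumulative fairness loss into regret via Theorem~\ref{thm:fair-loss-lagr}. Throughout I read $\beta^\round$ as the \emph{exact} $\alpha'$-violation probability of $\pi^\round$, i.e. $\beta^\round = \Pr_{x,x'}[|\pi^\round(x)-\pi^\round(x')| - \metric(x,x') > \alpha']$; the claim would be false for an arbitrary upper bound $\beta^\round$ (e.g. $\beta^\round\equiv 1$), so this is the intended reading. The starting observation is that, since the environment here feeds the auditor's pair, $\fairLoss_{\alpha'}(\pi^\round, z^\round) = \ind[\exists \subRound_1,\subRound_2 \in [\totalSubRound]:\ \fviolation_{\alpha'}(\pi^\round,(x^\round_{\subRound_1}, x^\round_{\subRound_2})) > 0]$: the auditor reports a pair exactly when the round-$\round$ batch contains an $\alpha'$-violating pair, and the reported pair is itself violating, so the loss is $1$ on precisely those rounds.

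Second, I would set up a filtration. Let $\mathcal{F}_{\round-1}$ denote the history through round $\round-1$, which determines $\pi^\round$ and hence $\beta^\round$ (so $\beta^\round$ is $\mathcal{F}_{\round-1}$-measurable), while the round-$\round$ batch $\bx^\round$ is a fresh draw of $\totalSubRound$ i.i.d.\ points independent of $\mathcal{F}_{\round-1}$. Inspecting only the first two individuals $x^\round_1, x^\round_2 \sim \cD|_{\cX}$ of the batch, the event that the whole batch contains a violating pair contains the event that this single pair violates, whence
\[
\E\!\left[\fairLoss_{\alpha'}(\pi^\round, z^\round) \mid \mathcal{F}_{\round-1}\right] \ge \Pr\!\left[\fviolation_{\alpha'}(\pi^\round, (x^\round_1, x^\round_2)) > 0 \mid \mathcal{F}_{\round-1}\right] = \beta^\round.
\]

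Third, the differences $X_\round := \E[\fairLoss_{\alpha'}(\pi^\round, z^\round)\mid\mathcal{F}_{\round-1}] - \fairLoss_{\alpha'}(\pi^\round, z^\round)$ form a martingale difference sequence bounded in $[-1,1]$. A two-sided Azuma--Hoeffding inequality then gives, with probability $1-\delta$, $|\sum_\round X_\round| \le \sqrt{2\totalRound \ln(2/\delta)}$, matching the stated slack. Chaining the pieces, $\sum_\round \beta^\round \le \sum_\round \E[\fairLoss_{\alpha'}\mid\mathcal{F}_{\round-1}] = \sum_\round \fairLoss_{\alpha'}(\pi^\round,z^\round) + \sum_\round X_\round$, and Theorem~\ref{thm:fair-loss-lagr} together with the assumption $R \ge 0$ bounds the first sum by $\Regret^{C,\alpha,\Auditor_{\alpha+\epsilon}}(\Algorithm,Q_\alpha,\totalRound)$, yielding the lemma.

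The main obstacle is the adaptivity flagged in the surrounding text: $\pi^\round$ and the auditor evolve across rounds, so $\beta^\round$ is itself a random variable rather than a fixed quantity, and one cannot invoke a fixed-policy concentration bound. The crux is therefore the measurability bookkeeping---establishing that $\beta^\round$ is $\mathcal{F}_{\round-1}$-measurable while the round-$\round$ sample is drawn independently of it---so that the single-pair lower bound $\E[\fairLoss_{\alpha'} \mid \mathcal{F}_{\round-1}] \ge \beta^\round$ holds \emph{conditionally} and the Azuma argument is legitimate. A secondary subtlety is noting that it suffices to witness a violation through one fixed in-batch pair rather than needing all $\binom{\totalSubRound}{2}$ pairs, which is exactly what makes the one-directional inequality (lower-bounding the loss by $\beta^\round$) go through.
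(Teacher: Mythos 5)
Your proposal is correct and follows essentially the same route as the paper's proof: lower-bound each $\beta^\round$ by the conditional expectation of the round-$\round$ violation indicator, apply Azuma to the resulting adapted martingale, and invoke Theorem~\ref{thm:fair-loss-lagr} (with $R\ge 0$) to bound the realized count of violating rounds by the Lagrangian regret. The only cosmetic difference is in the lower bound on the conditional expectation: you witness a violation via the single pair $(x^\round_1,x^\round_2)$, giving exactly $\beta^\round$, whereas the paper partitions the batch into $\floor{\totalSubRound/2}$ disjoint independent pairs and then weakens $1-(1-\beta^\round)^{\floor{\totalSubRound/2}}$ back to $\beta^\round$ --- both yield the same conclusion.
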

\begin{proof}
We first prove the following concentration inequality.
\begin{lemma}\label{lem:azuma_fairnes} For any fixed randomness of the algorithm,
 \[
 \Pr_{(\bx^\round, \by^\round)_{\round=1}^{\totalRound}}\left[\left\vert \sum_{\round=1}^\totalRound 
        \ind\left(\Auditor_{\alpha}\left({\bx}^{\round}, \pi^\round\right) \neq \emptyset \right) - \E_{({\bx}^{\prime\round}, {\by}^{\prime\round})_{\round=1}^{\totalRound}}\left[\sum_{\round=1}^\totalRound 
        \ind\left(\Auditor_{\alpha}\left({\bx}^{\prime\round}, \pi^\round\right) \neq \emptyset \right)\right]\right\vert \ge \gamma \right] \le 2\exp\left(-\frac{\gamma^2}{2\totalRound}\right)
 \]
\end{lemma}
\begin{proof}
Consider the following sequence $(B^\round)_{\round=1}^\totalRound$:
\[
B^\round = \sum_{j=1}^\round 
        \ind\left(\Auditor_{\alpha}\left(\bx^j, \pi^j\right) \neq \emptyset \right) - \E_{({\bx}^{\prime j},\by^{\prime j})_{j=1}^{\round}}\left[\sum_{j=1}^\round 
        \ind\left(\Auditor_{\alpha}\left({\bx}^{\prime j}, \pi^j\right) \neq \emptyset \right)\right].
\]
Note that this is a martingale: $\E[B^\round | B^1, \dots, B^{\round-1}] = B^{\round-1}$ because $\pi^t$ is fixed. Now, we apply Azuma's inequality. Since $\vert B^\round - B^{\round-1}\vert \le 1$, we have
\[
\Pr\left[\vert B^\totalRound - B^1 \vert  \ge \gamma \right] \le 2\exp\left(\frac{-\gamma^2}{2\totalRound}\right).
\]
\end{proof}

\begin{lemma}
\label{lem:sum_beta_lowerbound}
Fix the sequence of policies $(\pi^\round)_{\round=1}^\totalRound$. Suppose that each $\pi^\round$ is $(\alpha,\beta^\round)$-fair. Then:
\[
\sum_{\round=1}^\totalRound \E_{({\bx}^{'\round}, {\by}^{'\round})_{\round=1}^\totalRound}
\left[
        \ind\left(\Auditor_{\alpha}\left({\bx'}^{\round}, \pi^\round\right) \neq \nullPair \right)
\right] \ge \sum_{\round=1}^\totalRound \beta^\round.
\]
\end{lemma}
\begin{proof}
We will lower bound the probability of having an $\alpha$-fairness violation on a pair of individuals among those who have arrived in a single round. Because all possible pairs within a single batch of independently arrived individuals are not independent, we resort to a weaker lower bound by dividing all of the arrivals into independent pairs. As we go through the batch, we take every two individuals to form a pair, and note that these pairs must be independent.

\begin{align*}
&\E_{({\bx}^{\prime\round}, {\by}^{\prime\round})_{\round=1}^\totalRound}
\left[
        \ind\left(\Auditor_{\alpha}\left({\bx}^{\prime\round}, \pi^\round\right) \neq \emptyset \right)
\right]\\
&=\Pr_{({\bx}^{\prime\round}, {\by}^{\prime\round})_{\round=1}^\totalRound}\left[\exists \subRound,\subRound' \in [\totalSubRound] : \vert\pi^\round({x}^{\prime\round}_{\subRound})-\pi^\round({x}^{\prime\round}_{\subRound'})\vert - \metric({x}^{\prime\round}_{\subRound}, {x}^{\prime\round}_{\subRound'}) > \alpha\right]\\
&\geq \Pr_{({x'}^t,{y'}^t)_{t=1}^T}\left[\exists i \in \left[\floor*{\frac{\totalSubRound}{2}}\right] : \vert\pi^\round({x}^{\prime\round}_{2i-1})-\pi^t({x}^{\prime\round}_{2i})\vert - \metric({x}^{\prime\round}_{2i-1},{x}^{\prime\round}_{2i}) > \alpha\right]\\
&= 1 - \Pr_{({x'}^t,{y'}^t)_{t=1}^T}\left[\forall i \in \left[\floor*{\frac{\totalSubRound}{2}}\right] : \vert\pi^t({x}^{\prime\round}_{2i-1})-\pi^\round({x}^{\prime\round}_{2i})\vert - \metric({x}^{\prime\round}_{2i-1},{x}^{\prime\round}_{2i}) \leq \alpha\right]\\
&= 1 - \prod_{i=1}^{\floor*{\frac{\totalSubRound}{2}}} (1-\beta^\round)\\
&= 1-(1-\beta^\round)^{\floor*{\frac{\totalSubRound}{2}}}\\
&\ge \beta^\round
\end{align*}

\end{proof}

Next, we combine Lemma \ref{lem:azuma_fairnes} and Lemma \ref{lem:sum_beta_lowerbound}. With probability $1-\delta$ over the randomness of $((x^t, y^t))_{t=1}^T$, for any fixed randomness of the algorithm, we have
\begin{align}
\sum_{\round=1}^\totalRound 
        \ind\left(\Auditor_{\alpha}\left({\bx}^{\round}, \pi^\round\right) \neq \nullPair \right) &\ge \sum_{\round=1}^\totalRound \E_{({\bx}^{\prime\round}, {\by}^{\prime\round})_{\round=1}^\totalRound}\left[\ind\left(\Auditor_{\alpha}\left({\bx}^{\prime\round}, \pi^\round\right) \neq \nullPair \right)\right] - \sqrt{2\totalRound \ln\left(\frac{\delta}{2}\right)}\\
&\ge \sum_{\round=1}^\totalRound \beta^\round - \sqrt{2\totalRound \ln\left(\frac{\delta}{2}\right)}.
\end{align}

Suppose we write $\seed$ to denote the random seed used for the algorithm. For simplicity, write $E_1[((x^t, y^t))_{t=1}^T, \seed]$ to denote the event that the following equality holds
\[
    \sum_{\round=1}^\totalRound 
        \ind\left(\Auditor_{\alpha}\left({\bx}^{\round}, \pi^\round\right) \neq \nullPair \right) < \sum_{\round=1}^\totalRound \beta^\round - \sqrt{2\totalRound \ln\left(\frac{\delta}{2}\right)}. 
\]
And we write $E_2[((x^t, y^t))_{t=1}^T, \seed]$ to denote the event that \[
    \sum_{t=1}^T \ind\left(\Auditor_{\alpha}\left({\bx}^{\round}, \pi^\round\right) \neq \nullPair \right) > \Regret_{\fair}.
\]

We have that 
\begin{align*}
    &\forall \seed: \Pr_{(x^t, y^t)}\left[E_1\left[\left((x^t, y^t)\right)_{t=1}^T, \seed\right] | \seed\right] \le \delta\\
    &\forall ((x^t, y^t))_{t=1}^T: \Pr_{\seed}\left[E_2\left[\left(((x^t, y^t)\right)_{t=1}^T, \seed\right] | \left((x^t, y^t)\right)_{t=1}^T\right] \le \delta
\end{align*}

Therefore, we can use union bound to show
\begin{align*}
    &\Pr_{\seed, ((x^t, y^t))_{t=1}^T}\left[E_1\left[\left((x^t, y^t)\right)_{t=1}^T, \seed\right]\quad\text{or}\quad E_2\left[\left((x^t, y^t)\right)_{t=1}^T, \seed\right]\right]\\
    &\le \Pr_{\seed, ((x^t, y^t))_{t=1}^T}\left[E_1\left[\left((x^t, y^t)\right)_{t=1}^T, \seed\right]\right] + \Pr_{\seed, ((x^t, y^t))_{t=1}^T}\left[E_2\left[\left((x^t, y^t)\right)_{t=1}^T, \seed\right]\right]\\
    &\le \int_{((x^t, y^t))_{t=1}^T} \Pr_{\seed}\left[E_1\left[\left((x^t, y^t)\right)_{t=1}^T, \seed\right] \middle\vert ((x^t, y^t))_{t=1}^T\right]d\cD^{kT}(((x^t, y^t))_{t=1}^T) \\
    &+ \int_{\seed} \Pr_{((x^t, y^t))_{t=1}^T}\left[E_2\left[\left((x^t, y^t)\right)_{t=1}^T, \seed\right] \middle\vert \seed\right]d\cP(\seed)\\
    &\le 2\delta.
\end{align*}
In other words, we have that with probability $1-2\delta$, 
\begin{align*}
    \Regret_{\fair} \ge \sum_{\round=1}^\totalRound \beta^\round - \sqrt{2\totalRound \ln\left(\frac{\delta}{2}\right)}.
\end{align*}
\end{proof}

Finally, we combine the above lemmas to state our fairness generalization guarantee.

\begin{restatable}[Fairness Generalization]{theorem}{thmGen}
\label{thm:gen}
Suppose we are given an algorithm such that for any sequence $(x^t, y^t)_{t=1}^T$, it produces $(\pi^t)_{t=1}^T$ such that with probability $1-\delta$,
\[
    \sum_{t=1}^T \ind\left(\Auditor_{\alpha}\left({\bx}^{\round}, \pi^\round\right) \neq \nullPair \right)  = \sum_{t=1}^T \fairLoss_{\alpha}(\pi^t, (x^t, y^t, \Auditor_\alpha(\pi^t)))  \le \Regret_{\fair}.
\]
Then, with probability $1-\delta$, for any integer $\interpolation \leq T$, $\pi^{avg}$ is $(\alpha + \frac{\interpolation}{\totalRound}, \beta^*)$-fair where 
\[
\beta^* = \frac{1}{\interpolation}\left(\Regret_{\fair} + \sqrt{2\totalRound \ln\left(\frac{4}{\delta}\right)}\right).
\]
\end{restatable}

\begin{proof}
The theorem follows immediately by combining Lemma \ref{lem:gen} and Lemma \ref{lem:boundedsum}.
\end{proof}

\subsection{Bounds for Specific Algorithms}

Next, we plug in the regret guarantees we have established in Section \ref{sec:noregret} to yield the following accuracy and fairness generalization bounds:

\subsubsection{Exponential Weights}
Applying Theorem \ref{thm:avgloss} and Theorem \ref{thm:gen} to Theorem \ref{thm:exp-guarantee} (and set $q=\ln(\frac{|\cF|}{\delta})^{\frac{1}{4}} T^{\frac{7}{8}}$ for fairness), we get
\begin{corollary}
\label{cor:exp-weight}

With probability $1-\delta$, exponential weights with learning rate $\gamma = \sqrt{\frac{\ln(\vert\cF\vert)}{\totalRound}}$
achieves
\begin{enumerate}
    \item \textbf{Accuracy:}
    \[
    \E_{(x,y)\sim\cD}\left[\ell(\pi^{avg}(x),y)\right] \leq  \min\limits_{\pi \in \Pi_{\alpha}}\E_{(x,y)\sim\cD} \left[\ell(\pi(x),y)\right] + O\left(\frac{T^{-\frac{1}{4}}}{\alpha k}\sqrt{\ln\left(\frac{\vert \cF \vert}{\delta}\right)} \right).
    \]
    
    \item \textbf{Fairness:} $\pi^{avg}$ is $(\alpha + \lambda,\lambda)$-fair where
    \[
\lambda = O\left( \left(\frac{k}{\alpha}\right)^{\frac{1}{2}} \left(\ln\left(\frac{|\cF|}{\delta}\right)\right)^\frac{1}{4} T^{-\frac{1}{8}} \right).
\]

\end{enumerate}
\end{corollary}

\subsubsection{Follow-The-Perturbed-Leader}
Similarly for $\ftpl$, we can get the following generalization guarantee for $\ftpl$ by noting that, with probability $1-\delta$, \[\Regret_{\acc} = \frac{1}{\alpha} O \left(n^\frac{3}{4} \ln(|\cF|)^\frac{1}{2}T^\frac{5}{9} + \sqrt{T\ln\left(\frac{Tk}{\delta}\right)}  + k T^{\frac{7}{9}}\right)\] and \[\Regret_{\fair} = O\left(kn^\frac{3}{4} \ln\left(\frac{|\cF| Tk}{\delta}\right)^\frac{1}{2}  T^\frac{7}{9}\right).\] We set $q=k^\frac{1}{2} n^\frac{3}{8}  \ln\left(\frac{|\cF|Tk}{\delta}\right)^{\frac{1}{4}} T^{\frac{8}{9}}$ for fairness here, to yield to following guarantee:
\begin{corollary}
\label{cor:ftpl}
Using $\ftpl$ from \cite{SyrgkanisKS16} with a separator set of size $n$, with probability $1-\delta$, the average policy $\pi^{avg}$ has the following guarantee:
\begin{enumerate}
    \item \textbf{Accuracy:}\[
    \E_{(x,y)\sim\cD}\left[\ell(\pi^{avg}(x),y)\right] \leq  \min\limits_{\pi \in \Pi_{\alpha}}\E_{(x,y)\sim\cD} \left[\ell(\pi(x),y)\right] + O\left(\frac{n^\frac{3}{4} \ln(|\cF|)^\frac{1}{2}T^{\frac{-4}{9}} }{\alpha k} +  \sqrt{\frac{\ln\left(\frac{Tk}{\delta}\right)}{T}} +T^{\frac{-2}{9}} \right).
    \]
    \item \textbf{Fairness:} $\pi^{avg}$ is $(\alpha + \lambda,\lambda)$-fair where
    \[
\lambda = O\left( k^\frac{1}{2} n^\frac{3}{8}  \ln\left(\frac{|\cF|Tk}{\delta}\right)^{\frac{1}{4}} T^{-\frac{1}{9}}\right).
\]

\end{enumerate}
\end{corollary}

\begin{remark}
Corollaries \ref{cor:exp-weight} and \ref{cor:ftpl} imply that the average policy $\pi^{avg}$ has a non-trivial accuracy guarantee and a fairness generalization bound despite having much weaker feedback than considered in the setting of~\cite{YonaR18}, albeit worse rate than their uniform convergence bound. 
\end{remark}

\section{Conclusion and Future Directions} 
In this work, we have removed several binding restrictions in the context of learning with individual fairness. We hope that relieving the metric assumption as well as the assumption regarding full access to the similarity measure, and only requiring the auditor to detect a single violation for every time interval, will be helpful in making individual fairness more achievable and easier to implement in practice. As for future work - first of all, it would be interesting to explore the interaction with different models of feedback (one natural variant being one-sided feedback). Second, thinking about a model where the auditor only has access to binary decisions may be helpful in further closing the gap to practical use. 
Third, as most of the literature on individual fairness (including this work) is decoupling the similarity measure from the distribution over the target variable, it would be desirable to try to explore and quantify the compatibility of the two in specific instances.

\section{Acknowledgments}
We thank Sampath Kannan, Akshay Krishnamurthy, Katrina Ligett, and Aaron Roth for helpful conversations at an early stage of this work. Part of this work was done while YB, CJ, and ZSW were visiting the Simons Institute for the Theory of Computing.
YB is supported in part by Israel Science Foundation (ISF) grant \#1044/16, the United States Air Force and DARPA under contracts FA8750-16-C-0022 and FA8750-19-2-0222, the Federmann Cyber Security Center in conjunction with the Israel National Cyber Directorate, and the Apple Scholars in AI/ML PhD Fellowship. CJ is supported in part by NSF grant AF-1763307. ZSW is supported in part by the NSF FAI Award  \#1939606, an Amazon Research Award, a Google Faculty Research Award, a J.P. Morgan Faculty Award, a Facebook Research Award, and a Mozilla Research Grant. Any opinions, findings and conclusions or recommendations expressed in this material are those of the author(s) and do not necessarily reflect the views of the United States Air Force and DARPA. 

\bibliographystyle{plainnat}
\bibliography{refs.bib}

\newpage
\appendix
\section{Omitted Details from Section \ref{sec:gen}}
\label{app:gen}

\thmAvgLoss*
\begin{proof}
Fix $\pi^* \in \argmin\limits_{\pi\in \Pi_\alpha}\E\limits_{(x,y)\sim\cD}\left[\ell(\pi(x),y)\right]$. Note that $\pi^* \in Q_\alpha$ regardless of the realization of the sequence $((x^t, y^t))_{t=1}^T$. 
\begin{lemma}
For any fixed randomness of the algorithm, we must have
  \[
 \Pr_{\substack{(\bx^\round, \by^\round)_{\round=1}^{\totalRound}\\ \pi^\round = \Algorithm(\cdots)}}\left[\left\vert \sum_{\round=1}^\totalRound 
 \sum_{\subRound=1}^{\totalSubRound} \ell\left(\pi^\round\left(x^\round_{\subRound}\right), y^\round_{\subRound}\right) - \E_{({{\bx}}^{\prime\round}, {{\by}}^{\prime\round})_{\round=1}^{\totalRound}}\left[\sum_{\round=1}^\totalRound 
 \sum_{\subRound=1}^{\totalSubRound} \ell\left(\pi^\round\left({x'}^\round_{\subRound}\right), {y}^{\prime\round}_{\subRound}\right)
        \right]
 \right\vert \ge \gamma \right] \le 2\exp\left(\frac{-\gamma^2}{2\totalSubRound^2\totalRound}\right)
 \]
\end{lemma}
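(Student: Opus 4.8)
The plan is to prove this exactly as in Lemma~\ref{lem:azuma_fairnes}, replacing the per-round fairness-violation indicator with the per-round misclassification loss and tracking the enlarged bounded-difference constant. Having fixed the random coins of $\Algorithm$ (as in the surrounding proof of Theorem~\ref{thm:avgloss}), the only remaining randomness is the i.i.d.\ data draw, and each deployed policy $\pi^\round = \Algorithm((\pi^j, z^j)_{j=1}^{\round-1})$ is a deterministic function of the data observed in rounds $1,\dots,\round-1$. First I would introduce the filtration $(\mathcal{F}^\round)_{\round=0}^{\totalRound}$ where $\mathcal{F}^\round$ is generated by the batches $(\bx^j,\by^j)_{j=1}^{\round}$, and set
\[
B^\round = \sum_{j=1}^\round \left( \sum_{\subRound=1}^{\totalSubRound} \ell\!\left(\pi^j(x^j_{\subRound}), y^j_{\subRound}\right) - \E_{(\bx',\by')}\!\left[\sum_{\subRound=1}^{\totalSubRound} \ell\!\left(\pi^j(x'_{\subRound}), y'_{\subRound}\right)\right]\right),
\]
with $B^0 = 0$; the inner expectation holds $\pi^j$ fixed and averages over a fresh batch $(\bx',\by')\sim\cD^{\totalSubRound}$.

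The key step is to verify that $(B^\round)$ is a martingale with respect to $(\mathcal{F}^\round)$. Since $\pi^\round$ is $\mathcal{F}^{\round-1}$-measurable (it depends only on the first $\round-1$ batches and the fixed coins) while the round-$\round$ batch $(\bx^\round,\by^\round)$ is drawn i.i.d.\ and independent of $\mathcal{F}^{\round-1}$, the conditional expectation of the round-$\round$ loss given $\mathcal{F}^{\round-1}$ is precisely $\E_{(\bx',\by')}[\sum_{\subRound} \ell(\pi^\round(x'_{\subRound}), y'_{\subRound})]$. Hence $\E[B^\round \mid \mathcal{F}^{\round-1}] = B^{\round-1}$. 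This is the only place where the i.i.d.\ assumption and the predictability of the deployed policy are used, and it is the main conceptual point to get right: the subtlety is that the expectation subtracted at round $j$ must be the conditional one, computed with the \emph{realized} (history-dependent) policy $\pi^j$ held fixed, rather than an unconditional quantity.

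Finally I would bound the martingale differences. Each round's loss $\sum_{\subRound=1}^{\totalSubRound} \ell(\pi^\round(x^\round_{\subRound}), y^\round_{\subRound})$ is a sum of $\totalSubRound$ terms, each in $[0,1]$, so it lies in $[0,\totalSubRound]$; since its conditional expectation lies in the same interval, we get $\vert B^\round - B^{\round-1}\vert \le \totalSubRound$. Applying Azuma's inequality to $B^\totalRound = B^\totalRound - B^0$ with $c_\round = \totalSubRound$ gives $\sum_{\round=1}^{\totalRound} c_\round^2 = \totalSubRound^2 \totalRound$, and hence
\[
\Pr\left[\vert B^\totalRound\vert \ge \gamma\right] \le 2\exp\left(\frac{-\gamma^2}{2\totalSubRound^2\totalRound}\right),
\]
which is exactly the claimed bound. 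The extra $\totalSubRound^2$ in the denominator (absent from Lemma~\ref{lem:azuma_fairnes}, whose increments were bounded by $1$) is the sole quantitative difference, arising because each increment now aggregates $\totalSubRound$ individual losses. No step is genuinely hard; the only care needed is the measurability bookkeeping establishing the martingale property.
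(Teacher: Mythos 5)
Your proposal is correct and follows essentially the same route as the paper: define the centered cumulative loss as a martingale (the policy $\pi^\round$ being determined by the history, hence predictable), bound each increment by $\totalSubRound$, and apply Azuma's inequality. Your version is in fact slightly more careful than the paper's, since you anchor the martingale at $B^0=0$ and make the filtration and conditional-expectation bookkeeping explicit, whereas the paper states the bound for $\vert A^\totalRound - A^1\vert$ while implicitly using $\totalRound$ increments.
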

\begin{proof}

Define $\left(A^\round \right)_{\round=1}^\totalRound$ as
\[
A^\round = \sum_{j=1}^\round
 \sum_{\subRound=1}^{\totalSubRound} \ell\left(\pi^j\left(x^j_{\subRound}\right), y^j_{\subRound}\right) - \E_{({\bx}^{\prime j}, {\by}^{\prime j})_{j=1}^{\round}}\left[\sum_{j=1}^\round 
 \sum_{\subRound=1}^{\totalSubRound} \ell\left(\pi^j\left({x}^{\prime j}_{\subRound}\right), {y}^{\prime j}_{\subRound}\right)
        \right].
\]
Note that $\left(A^\round \right)_{\round=1}^\totalRound$ is a martingale: $\E[A^\round | A^1, \dots, A^{\round-1}] = A^{\round-1}$ because $\pi^t$ is deterministically determined in each round as a result of the fixed randomness. Also, we have $\vert A^\round - A^{\round-1} \vert \le \totalSubRound$. Therefore, applying Azuma's inequality yields 
\[
\Pr\left[ \vert A^\totalRound - A^1 \vert \ge \gamma\right] \le 2\exp\left(\frac{-\gamma^2}{2\totalSubRound^2\totalRound}\right).
\]
\end{proof}

Applying the Chernoff bound, we get a similar concentration bound for $\pi^*$:
\[
 \Pr_{(\bx^\round, \by^\round)_{\round=1}^{\totalRound} }\left[\left\vert \sum_{\round=1}^\totalRound
 \sum_{\subRound=1}^{\totalSubRound} \ell\left(\pi^*\left(x^\round_{\subRound}\right), y^\round_{\subRound}\right) - \E_{({\bx}^{\prime\round}, {\by}^{\prime\round})_{\round=1}^{\totalRound}}\left[\sum_{\round=1}^\totalRound 
 \sum_{\subRound=1}^{\totalSubRound} \ell\left(\pi^*\left({x}^{\prime\round}_{\subRound}\right), {y}^{\prime\round}_{\subRound}\right)
        \right]
 \right\vert \ge \gamma \right] \le 2\exp\left(\frac{-\gamma^2}{2\totalSubRound^2\totalRound}\right).
\]

Next, using triangle inequality, with probability $1-\delta$ only over the randomness of $(x^t,y^t)_{t=1}^T$, it holds that for any fixed randomness of the algorithm
\begin{align}
&\E_{(\bx^\round, \by^\round)_{\round=1}^\totalRound}\left[\sum_{\round=1}^\totalRound \sum_{\subRound=1}^{\totalSubRound} \ell\left(\pi^\round\left(x^\round_{\subRound}\right), y^\round_{\subRound}\right)\right] - \E_{(\bx^\round, \by^\round)_{\round=1}^\totalRound}\left[\sum_{\round=1}^\totalRound \sum_{\subRound=1}^{\totalSubRound} \ell(\pi^*(x^\round_{\subRound}), y^\round_{\subRound})\right]\nonumber\\ 
&\le \sum_{\round=1}^\totalRound \sum_{\subRound=1}^{\totalSubRound} \ell\left(\pi^\round\left(x^\round_{\subRound}\right), y^\round_{\subRound}\right) - \sum_{\round=1}^\totalRound \sum_{\subRound=1}^{\totalSubRound} \ell(\pi^*(x^\round_{\subRound}), y^\round_{\subRound}) + 2\sqrt{\ln\left(\frac{4}{\delta}\right){2\totalSubRound^2\totalRound}}\label{eqn:azuma-acc}
\end{align}

Suppose we write $\seed$ to denote the random seed used for the algorithm. For simplicity, write $E_1\left[\left(x^t, y^t)\right)_{t=1}^T, \seed\right]$ to denote the event that inequality \eqref{eqn:azuma-acc} doesn't hold true.

And we write $E_2\left[\left((x^t, y^t)\right)_{t=1}^T, \seed\right]$ to denote the event that \[
    \sum_{t=1}^T \misclassLoss(\pi^t, (x^t, y^t)) - \min_{\pi^* \in Q_{\alpha}} \sum_{t=1}^T \misclassLoss(\pi^*, x^t, y^t) > \Regret_{\acc}
\]

We have that 
\begin{align*}
    &\forall \seed: \Pr_{(x^t, y^t)}\left[E_1\left[\left((x^t, y^t)\right)_{t=1}^T, \seed\right] \middle\vert \seed\right] \le \delta\\
    &\forall ((x^t, y^t))_{t=1}^T: \Pr_{\seed}\left[E_2\left[\left(((x^t, y^t)\right)_{t=1}^T, \seed\right] \middle\vert ((x^t, y^t))_{t=1}^T\right] \le \delta
\end{align*}

Therefore, we can use union bound to show
\begin{align*}
    &\Pr_{\seed, ((x^t, y^t))_{t=1}^T}\left[E_1\left[\left((x^t, y^t)\right)_{t=1}^T, \seed\right]\quad\text{or}\quad E_2\left[\left((x^t, y^t)\right)_{t=1}^T, \seed\right]\right]\\
    &\le \Pr_{\seed, ((x^t, y^t))_{t=1}^T}\left[E_1\left[\left((x^t, y^t)\right)_{t=1}^T, \seed\right]\right] + \Pr_{\seed, ((x^t, y^t))_{t=1}^T}\left[E_2\left[\left((x^t, y^t)\right)_{t=1}^T, \seed\right]\right]\\
    &\le \int_{((x^t, y^t))_{t=1}^T} \Pr_{\seed}\left[E_1\left(\left(x^t, y^t)\right)_{t=1}^T, \seed\right] \middle\vert ((x^t, y^t))_{t=1}^T\right]d\cD^{kT}(((x^t, y^t))_{t=1}^T) \\
    &+ \int_{\seed} \Pr_{((x^t, y^t))_{t=1}^T}\left[E_2\left[\left((x^t, y^t)\right)_{t=1}^T, \seed\right] \middle\vert \seed\right]d\cP(\seed)\\
    &\le 2\delta.
\end{align*}
In other words, we have that with probability $1-2\delta$,
\begin{align*}
&\frac{1}{kT}\E_{(\bx^\round, \by^\round)_{\round=1}^\totalRound}\left[\sum_{\round=1}^\totalRound \sum_{\subRound=1}^{\totalSubRound} \ell\left(\pi^\round\left(x^\round_{\subRound}\right), y^\round_{\subRound}\right)\right] - \frac{1}{kT}\E_{(\bx^\round, \by^\round)_{\round=1}^\totalRound}\left[\sum_{\round=1}^\totalRound \sum_{\subRound=1}^{\totalSubRound} \ell(\pi^*(x^\round_{\subRound}), y^\round_{\subRound})\right]\\ 
&\le \sum_{\round=1}^\totalRound \sum_{\subRound=1}^{\totalSubRound} \ell\left(\pi^\round\left(x^\round_{\subRound}\right), y^\round_{\subRound}\right) - \sum_{\round=1}^\totalRound \sum_{\subRound=1}^{\totalSubRound} \ell(\pi^*(x^\round_{\subRound}), y^\round_{\subRound}) + 2\sqrt{\ln\left(\frac{4}{\delta}\right){2\totalSubRound^2\totalRound}}\\
&\le \frac{1}{kT}\Regret_{\acc} + \frac{1}{kT} 2\sqrt{\ln\left(\frac{4}{\delta}\right){2\totalSubRound^2\totalRound}}.
\end{align*}

For the left hand side of the inequality, observe that the following holds:
\begin{align}
\E_{(x,y)\sim\cD} \left[\ell(\pi^{avg}(x),y)\right] &= \frac{1}{\totalSubRound\totalRound}\sum_{\subRound=1}^\totalSubRound \sum_{i=1}^\totalRound\E_{(x,y)\sim\cD}\left[\E_{\pi\sim\mathbb{U}\{\pi^1,\dots,\pi^T\}}\left[\ell(\pi(x),y)\right]\right]\label{tra:linear}\\
&=
\frac{1}{\totalSubRound\totalRound}\sum_{\subRound=1}^\totalSubRound \sum_{i=1}^\totalRound\E_{(x,y)\sim\cD}\left[\frac{1}{\totalRound}\sum_{\round=1}^\totalRound\ell(\pi^\round(x),y)\right]\notag\\
&=
\frac{1}{\totalSubRound\totalRound}\sum_{\subRound=1}^\totalSubRound \sum_{\round=1}^\totalRound\E_{(x,y)\sim\cD}\left[\frac{1}{\totalRound}\sum_{i=1}^\totalRound\ell(\pi^\round(x),y)\right]\notag\\
&=
\frac{1}{\totalSubRound\totalRound}\sum_{\subRound=1}^\totalSubRound \sum_{\round=1}^\totalRound\E_{(x,y)\sim\cD}\left[\ell(\pi^\round(x),y)\right]\notag\\
&=
\frac{1}{\totalSubRound\totalRound}\E_{(\bx^\round, \by^\round)_{\round=1}^\totalRound\sim_{i.i.d.}\cD^{\totalSubRound\totalRound}}\left[\sum_{\round=1}^\totalRound \sum_{\subRound=1}^\totalSubRound\ell(\pi^\round(x^\round_\subRound),y^\round_\subRound)\right]\notag
\end{align}

Transition \ref{tra:linear} stems from the fact that our misclassification loss $\ell$ is linear with respect to the base classifiers in $\mathcal{F}$. Hence, taking the uniform distribution over $\pi^1,\dots,\pi^\totalRound$ gives
\[
\ell(\pi^{avg}(x),y) = \E\limits_{\pi\sim\mathbb{U}\{\pi^1,\dots,\pi^T\}}\left[\ell(\pi(x),y)\right].
\]
\end{proof}

\textbf{Tightness of Bound in Observation \ref{obs:basic}}\\
Consider the following example:
$\cX = \{x_1,x_2,x_2\}$, $\cD\vert_{\cX} = \mathbb{U}\{\cX\}$ (i.e. uniform distribution over $\cX$), $\alpha = 0.1$. $\mathcal{F} = \{h_1,h_2\}$ given by:
\begin{align*}
h_1(x_1) = 1,~h_1(x_2) = 0,~h_1(x_3) = 0\\ 
h_2(x_1) = 1,~h_2(x_2) = 1,~h_2(x_3) = 0   
\end{align*}

Also, assume the dissimilarity measure by the auditor is: 
\[
\metric(x_1,x_2) = 0,~\metric(x_2,x_3) = 0,~\metric(x_3,x_1) = 1.
\]

Assume the algorithm deploys policies in the following manner:
$\pi^\round = \begin{cases}h_1 &\text{$\round$~is~odd}\\h_2&\text{$\round$~is~even}
\end{cases}$

Both $h_1,h_2$ are exactly $(\alpha,\frac{1}{8})-fair$. If $\totalRound$ is even, $\pi^{avg}$ is exactly $(\alpha,\frac{1}{4})$-fair.

\end{document}